\newcommand{\METHOD}{{MURAL}\xspace}
\newtheorem{theorem}{Theorem}
\DeclareMathOperator*{\argmax}{arg\,max}
\icmltitlerunning{MURAL: Meta-Learning Uncertainty-Aware Rewards for RL}
\begin{document}

\twocolumn[
\icmltitle{MURAL: Meta-Learning Uncertainty-Aware Rewards for Outcome-Driven Reinforcement Learning}



\icmlsetsymbol{equal}{*}

\begin{icmlauthorlist}
\icmlauthor{Kevin Li}{equal,ucb}
\icmlauthor{Abhishek Gupta}{equal,ucb}
\icmlauthor{Ashwin Reddy}{ucb}
\icmlauthor{Vitchyr Pong}{ucb}
\icmlauthor{Aurick Zhou}{ucb}
\icmlauthor{Justin Yu}{ucb}
\icmlauthor{Sergey Levine}{ucb}
\end{icmlauthorlist}

\icmlaffiliation{ucb}{Department of Electrical Engineering and Computer Sciences, UC Berkeley, Berkeley, USA}

\icmlcorrespondingauthor{Kevin Li}{kevintli@berkeley.edu}
\icmlcorrespondingauthor{Abhishek Gupta}{abhigupta@berkeley.edu}

\icmlkeywords{Machine Learning, Reinforcement Learning, Bayesian Classifiers, Normalized Maximum Likelihood, Robotics, ICML}

\vskip 0.3in
]



\printAffiliationsAndNotice{\icmlEqualContribution} 

\begin{abstract}
Exploration in reinforcement learning is a challenging problem: in the worst case, the agent must search for high-reward states that could be hidden anywhere in the state space. Can we define a more tractable class of RL problems, where the agent is provided with examples of successful outcomes? In this problem setting, the reward function can be obtained automatically by training a classifier to categorize states as successful or not. If trained properly, such a classifier can provide a well-shaped objective landscape that both promotes progress toward good states and provides a calibrated exploration bonus. In this work, we show that an uncertainty aware classifier can solve challenging reinforcement learning problems by both encouraging exploration and provided directed guidance towards positive outcomes. We propose a novel mechanism for obtaining these calibrated, uncertainty-aware classifiers based on an amortized technique for computing the normalized maximum likelihood (NML) distribution. To make this tractable, we propose a novel method for computing the NML distribution by using meta-learning. We show that the resulting algorithm has a number of intriguing connections to both count-based exploration methods and prior algorithms for learning reward functions, while also providing more effective guidance towards the goal. We demonstrate that our algorithm solves a number of challenging navigation and robotic manipulation tasks which prove difficult or impossible for prior methods.
\end{abstract}

\section{Introduction}
\label{sec:intro}
While reinforcement learning (RL) has been shown to successfully solve problems with careful reward design~\citep{rajeswaran2018learning}, RL in its most general form, with no assumptions on the dynamics or reward function, requires solving a challenging uninformed search problem in which rewards are sparsely observed. Techniques that explicitly provide ``reward-shaping''~\citep{ng99rewardshaping}, or modify the reward function to guide learning, can help take some of the burden off of exploration, but shaped rewards are often difficult to provide without significant domain knowledge. Moreover, in many domains of practical significance, actually specifying rewards in terms of high dimensional observations can be extremely difficult, making it difficult to directly apply RL to problems with challenging exploration.

\begin{figure}[t]
    \centering
    \includegraphics[width=\linewidth]{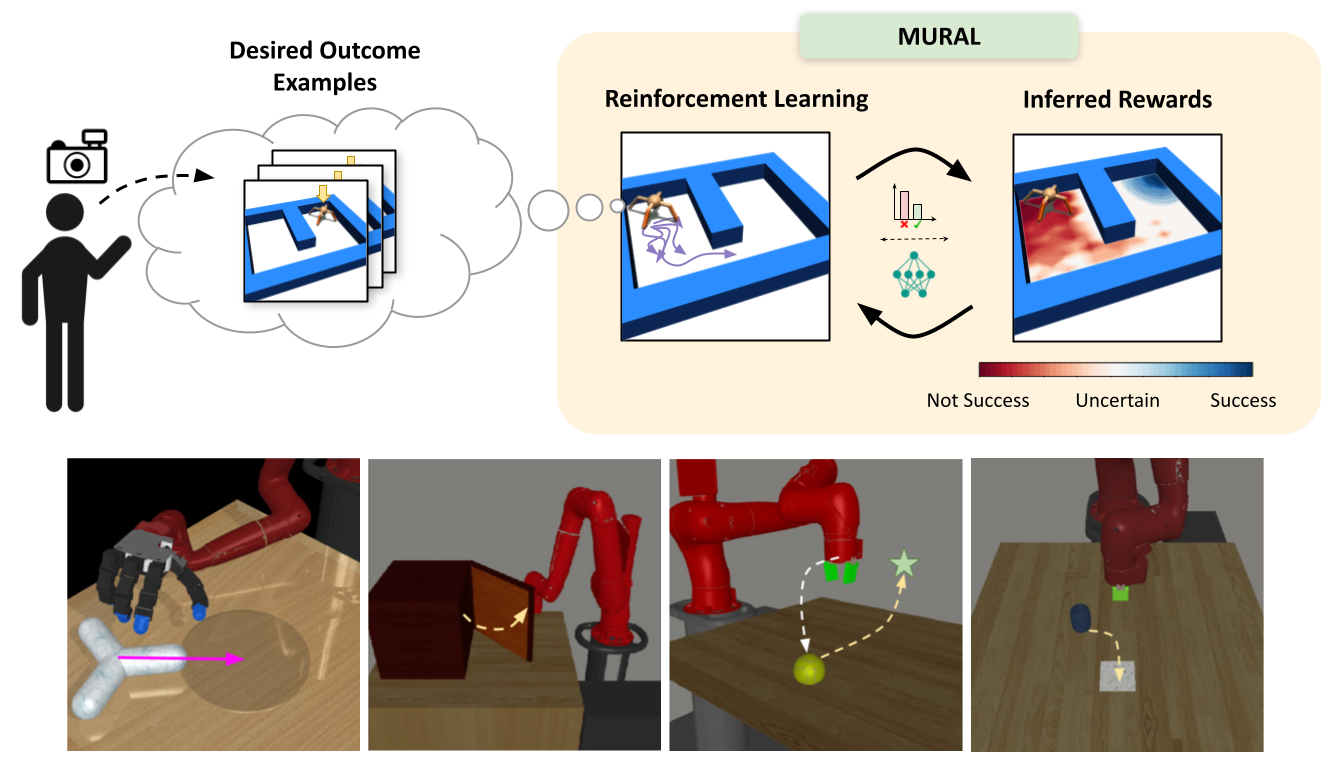}
    \caption{\footnotesize{\textbf{\METHOD:} Our method trains an uncertainty-aware classifier based on user-provided examples of successful outcomes. Appropriate uncertainty in the classifier, obtained via a meta-learning based estimator for the normalized maximum likelihood (NML) distribution, automatically incentivizes exploration and provides reward shaping for RL. This can solve complex robotic manipulation and navigation tasks as shown here.}}
    \vspace{-0.5cm}
    \label{fig:main-fig}
\end{figure}

Can the RL problem be made more tractable if the agent is provided with examples of successful outcomes instead of an uninformative reward function? Such examples are often easier to provide than, for example, entire demonstrations or a hand-designed shaped reward function. However, they can still provide considerable guidance on how to successfully accomplish a task, potentially alleviating exploration challenges if the agent can successfully recognize similarities between visited states and the provided examples. In this paper, we study such a problem setting, where instead of a hand-designed reward function, the RL algorithm is provided with a set of \emph{successful outcome examples}: states in which the desired task has been accomplished successfully. Prior work~\citep{vice18fu,r3l} aims to solve tasks in this setting by estimating the distribution over these states and maximizing the probability of reaching states that are likely under this distribution. While this can work well in some domains, it has largely been limited to settings without significant exploration challenges. In our work, we focus on the potential for this mode of task specification to enable RL algorithms to solve more challenging tasks without the need for manual reward shaping. Intuitively, the availability of extra information in the form of explicit success examples can provide the algorithm more directed information for exploration, rather than having to simply rely on uninformed task agnostic exploration methods. This allows us to formulate a class of more tractable problems, which we refer to as outcome-driven RL.

However, in order to attain improved exploration, an outcome-driven RL agent must be able to estimate some notion of similarity between the visited states and successful outcomes, so as to utilize this similarity as an automatic reward shaping. Our method addresses this challenge by training a classifier to distinguish successful states, provided by the user, from those generated by the current policy, analogously to generative adversarial networks~\citep{gan} and previously proposed methods for inverse reinforcement learning~\citep{fu2018learning}. In general, such a classifier may not provide effective reward shaping for learning the policy, since it does not explicitly quantify uncertainty about success probabilities and can be overly pessimistic in providing reward signal for learning. We discuss how Bayesian classifiers incorporating a particular form of uncertainty quantification based on the normalized maximum likelihood (NML) distribution can incentivize exploration in outcome-driven RL problems. To understand its benefits, we connect our approach to count-based exploration methods, while also showing that it improves significantly over such methods when the classifier exhibits good generalization properties, due to its ability to utilize success examples. Finally, we propose a practical algorithm to train NML-based success classifiers in a computationally efficient way using meta-learning, and show experimentally that our method can more effectively solve a range of challenging navigation and robotic manipulation tasks. 

Concretely, this work illustrates the challenges of using standard success classifiers~\citep{vice18fu} for outcome-driven RL, and proposes a novel technique for training uncertainty aware classifiers with normalized maximum likelihood, which is able to both incentivize the exploration of novel states and provide reward shaping that guides exploration towards successful outcomes. We present a tractable algorithm for learning these uncertainty aware classifiers in practice by leveraging concepts from meta-learning. We analyze our proposed technique for reward inference experimentally across a number of navigation and robotic manipulation domains and show benefits over prior classifier-based RL methods as well as goal-reaching methods.

\section{Related Work}
\label{sec:related}

While a number of methods have been proposed to improve exploration, it remains a challenging open problem in RL~\cite{misra2019homer}. Standard exploration methods either add bonuses to the reward function that encourage a policy to visit novel states in a task-agnostic manner~\citep{wiering1998efficient,auer2002finite,schaul2011curiosity, houthooft2016vime,fu2017ex2,pathak2017curiosity,tang2017exploration,stadie2015incentivizing,bellemare2016unifying,burda2018exploration,odonoghue2018variational}, or approximate Thompson sampling from a posterior over value functions~\citep{strens2000bayesian,osband2013more,osband2016deep}. Whereas these techniques are uninformed about the task, we consider a constrained but widely applicable setting where the desired outcome can be specified by success examples, allowing for more efficient task-directed exploration.

Designing well-shaped reward functions can also make exploration easier, but often requires significant domain knowledge ~\citep{andrychowicz2020learning}, access to privileged information~\citep{levine2016gps} or a human in the loop providing rewards~\citep{knox2009interactively,singh2019end}.
Prior work has considered specifying rewards by providing example demonstrations and inferring rewards with inverse RL~\citep{abbeel2004apprenticeship,ziebart2008maximum,ho2016generative,fu2018learning}. This requires expensive expert demonstrations to be provided to the agent. In contrast, our work has the minimal requirement of successful outcome states, which can be provided more cheaply and intuitively. This subclass of problems is also related to goal-conditioned RL ~\citep{kaelbling1993learning,schaul2015universal,zhu2017target,andrychowicz2017hindsight,nair2018visual,veeriah2018many,rauber2018hindsight,warde2018unsupervised,colas2019curious,ghosh2019learning,pong2020skew} but is more general, since it allows for the notion of success to be more abstract than reaching a single state. 

A core idea behind our method is using a Bayesian classifier to learn a suitable reward function. Bayesian inference with expressive models and high dimensional data can often be intractable, requiring strong assumptions on the form of the posterior~\citep{hoffman2013stochastic,blundell2015weight,maddox2019simple}. In this work, we build on the concept of normalized maximum likelihood~\citep{rissanen1996fisher,shtar1987universal}, or NML, to learn Bayesian classifiers that can impose priors over the space of outcomes. Although NML is typically considered from the perspective of optimal coding~\citep{grunwald2007minimum,fogel2018universal}, we show how it can be used for success classifiers, and discuss connections to exploration in RL. We propose a novel technique for making NML computationally tractable based on meta-learning, which more directly optimizes for quick NML computation as compared to prior methods like ~\citet{zhou2020acnml} which learn an amortized posterior.

\section{Preliminaries}
\label{sec:prelim}

In this section, we discuss background on RL using successful outcome examples as well as conditional normalized maximum likelihood.

\subsection{Reinforcement Learning with Outcome Examples}
\label{sec:prelim-rl}

\begin{figure*}[t]
    \centering
    \includegraphics[width=0.95\linewidth]{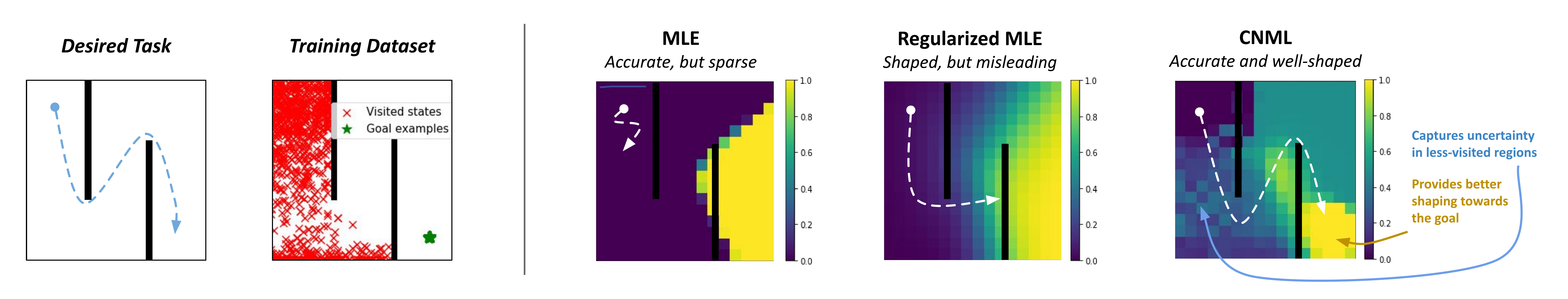}
    \vspace{-0.4cm}
    \caption{\footnotesize{Comparison of rewards given by various classifier training schemes on a 2D maze example. Typical maximum likelihood (MLE) classifiers commonly suffer from either a lack of useful learning signal (if trained to convergence) or misleading local optima (if regularized using standard methods such as weight decay or early stopping), whereas CNML produces accurate and well-shaped rewards.}}
    \label{fig:failclassifier}
\end{figure*}

We follow the framework proposed by \citet{vice18fu} and assume that we are provided with a Markov decision process (MDP) \emph{without} a reward function, given by $\mathcal{M} = (\mathcal{S}, \mathcal{A}, \mathcal{T}, \gamma, \mu_0)$, as well as successful outcome examples $\mathcal{S}_{+} = \{s_+^k\}_{k=1}^K$, which is a set of states in which the desired task has been accomplished. This formalism is easiest to describe in terms of the control as inference framework~\citep{cai}. The relevant graphical model (refer to ~\citet{vice18fu}) consists of states and actions, as well as binary success variables $e_t \in \{0, 1\}$ that represent the occurrence of a particular event. The agent's objective is to cause this event to occur (e.g., a robot that is cleaning the floor must cause the ``floor is clean'' event to occur). Formally, we assume that the states in $\mathcal{S}_{+}$ are sampled from the distribution $p(s_t | e_t = 1)$ — that is, states where the desired event has taken place — and try to infer the distribution $p(e_t = 1 | s_t)$ to use as a reward function. In this work, we focus on efficient methods for solving this reformulation of the RL problem by utilizing a novel uncertainty quantification method to represent $p(e_t | s_t)$.

In practice, prior methods that build on this formulation of the RL problem derive an algorithm where the reward function in RL is produced by a classifier that estimates $p(e_t = 1 | s_t)$. Following the derivation in ~\citet{fu2018learning}, it is possible to show that the correct source of \emph{negative} examples is the state distribution of the policy itself, $\pi(s)$. This insight results in a simple  algorithm: at each iteration of the algorithm, the policy is updated to maximize the current reward, given by $\log p(e_t = 1 | s_t)$, then samples from the policy are added to the set of negative examples $\mathcal{S}_{-}$, and the classifier is retrained on the original positive set $\mathcal{S}_{+}$ and the updated negative set $\mathcal{S}_{-}$. As noted by \citet{vice18fu}, this process is closely connected to GANs and inverse reinforcement learning, where the classifier plays the role of the discriminator and the policy that of the generator. However, as we will discuss, this strategy can often face significant exploration challenges.  

\subsection{Conditional Normalized Maximum Likelihood}
\label{sec:prelim-nml}
Our work utilizes the principle of conditional normalized maximum likelihood (CNML)~\citep{Rissanen2007ConditionalModels, grunwald2007minimum, fogel2018universal}, which provides a technique for learning uncertainty aware, calibrated classifiers. We review the key ideas behind CNML briefly.  CNML is a method for performing $k$-way classification, given a model class $\Theta$ and a dataset $\mathcal{D} = \{(x_0, y_0), (x_1, y_1), ..., (x_n, y_n)\}$, and has been shown to provide better calibrated predictions and uncertainty estimates with minimax regret guarantees~\citep{pnml}. More specifically, the CNML distribution can be shown to provably minimize worst-case regret against an oracle learner that has access to the true labels, but does not know which point it will be tested on. We refer the reader to ~\citet{fogel2018universal, zhou2020acnml} for a more complete consideration of the theoretical properties of the CNML distribution.

To predict the class of a query point $x_q$, CNML constructs $k$ augmented datasets by adding $x_q$ with a different label in each dataset, which we write as $\mathcal{D} \cup (x_q, y=i), i \in (1, 2, ..., k)$.
CNML then defines the class distribution by solving the maximum likelihood estimation problem at query time for each of these augmented datasets to convergence, and normalizes the likelihoods as follows:
\begin{align}
    &p_{\text{CNML}}(y=i|x_q) = \frac{p_{\theta_i}(y=i|x_q)}{\sum_{j=1}^k p_{\theta_j}(y=j|x_q)} \\
    &\theta_i = \argmax_{\theta \in \Theta} \mathbb{E}_{(x, y) \sim \mathcal{D} \cup (x_q, y=i)}[\log p_{\theta}(y|x)]
    \label{eqn:CNML}
\end{align}
If $x_q$ is close to other datapoints in $\mathcal{D}$, the model will struggle to assign a high likelihood to labels that differ substantially from those of nearby points.
However, if $x_q$ is far from all datapoints in $\mathcal{D}$, then the different augmented maximum likelihood problems can easily classify $x_q$ as any arbitrary class, providing us with likelihoods closer to uniform. We refer readers to \citet{grunwald2007minimum} for an in-depth discussion of CNML and its connections to minimum description length and regret minimization. Intuitively, the CNML classifier provides a way to impose a uniform prior for uncertainty quantification, where we predict the uniform distribution on unseen inputs since they are maximally uncertain, and defer more to the maximum likelihood solution on frequently seen inputs since they are minimally uncertain.

\section{Bayesian Success Classifiers for Reward Inference}
\label{sec:method}

As discussed in Section~\ref{sec:prelim-rl}, a principled way of approaching outcome-driven RL is to train a classifier to determine whether a particular state is a successful outcome or not. However, while such a technique would eventually converge to the correct solution, it frequently suffers from uninformative or incorrect rewards during the learning process. For example, Figure~\ref{fig:failclassifier} depicts a simple 2D maze scenario where the agent starts at the top left corner and the positive outcomes are at the bottom right corner of the environment. Without suitable regularization, the decision boundary may take on the form of a sharp boundary \emph{anywhere} between the positive and negative examples in the early stages of training. As a result, the classifier might provide little to no reward signal for the policy, since it can assign arbitrarily small probabilities to the states sampled from the policy. Given that the classifier-based RL process is essentially equivalent to training a GAN (as described in Section~\ref{sec:prelim-rl}), this issue is closely related to the challenges of GAN training as noted by ~\citet{arjovsky18ganprincipled}, where an ideal maximum likelihood discriminator provides no gradient signal for training the generator.  

We note that this issue is not pathological: our experiments in \Cref{sec:experiments} show that this phenomenon of poor reward shaping happens in practice. In addition, introducing na\"ively chosen forms of regularization such as weight decay, as is common in prior works, may actually provide \emph{incorrect} reward shaping to the algorithm, making it more challenging to actually accomplish the task (as illustrated in Figure~\ref{fig:failclassifier}). This often limits classifier-based RL techniques to tasks with trivial exploration challenges. In this section, we will discuss how a simple change to the procedure for training a classifier, going from standard maximum likelihood estimation to an approach based on the principle of normalized maximum likelihood, allows for an appropriate consideration of uncertainty quantification that can solve problems with non-trivial exploration challenges.

\subsection{Regularized Success Classifiers via Normalized Maximum Likelihood}
\label{sec:nml_classifier}

\begin{figure}
\vspace{-0.3cm}
\begin{minipage}{\linewidth}
    \begin{algorithm}[H]
      	\caption{RL with CNML-Based Success Classifiers}
      	\label{alg:cnml-alg}
      	\begin{algorithmic}[1]
      	\STATE User provides success examples $\mathcal{S}_{+}$
      	 \STATE Initialize policy $\pi$, replay buffer $\mathcal{S}_{-}$, and reward classifier parameters $\theta_{\mathcal{R}}$
      	\FOR{iteration $i=1, 2, ...$}
      	    \STATE Add on-policy examples to $\mathcal{S}_{-}$ by executing $\pi$.
            \STATE Sample $n_\text{test}$ points from $\mathcal{S}_{+}$ (label 1) and $n_\text{test}$ points from $\mathcal{S}_{-}$ (label 0) to construct a dataset $\mathcal{D}$
            \STATE Assign state rewards as $r(s) = p_\text{CNML}(e=1|s, \mathcal{D})$
            \STATE Train $\pi$ with RL algorithm
      	\ENDFOR
      	\end{algorithmic}
    \end{algorithm}
\end{minipage}
\vspace{-0.6cm}
\end{figure}

It is important to note that for effective exploration in reinforcement learning, the rewards should not just indicate whether a state is a successful outcome (since this will be $0$ everywhere but successful outcomes), but should instead provide a sense of whether a particular state may be on the path to a successful outcome and should be explored further. The standard maximum likelihood classifier described in Section~\ref{sec:prelim} is overly pessimistic in doing so, setting the likelihood of all intermediate states to $0$ in the worst case, potentially mislabeling promising states to explore. To avoid this, we want to use a classification technique that minimizes this worst-case regret, maintaining some level of uncertainty about whether under-visited states are on the path to successful outcomes. As discussed in Section~\ref{sec:prelim-nml},  the technique of conditional normalized maximum likelihood provides us a straightforward way to obtain such a classifier. CNML is particularly well suited to this problem since, as discussed by \citet{Zhang2011ModelSW}, it essentially imposes a uniform prior over the space of outcomes. It thus avoids pathological collapse of rewards by maintaining a measure of uncertainty over whether a state is potentially promising to explore further, rather than immediately bringing its likelihood to $0$ as maximum likelihood solutions would.

To use CNML for reward inference, the procedure is similar to the one described in Section~\ref{sec:prelim}. We construct a dataset using the provided successful outcomes as positives and on-policy samples as negatives. However, the label probabilities for RL are instead produced by the CNML procedure to obtain rewards $r(s) = p_{\text{CNML}}(e=1|s)$ as follows: 

\begin{align}
    &r(s) = \frac{p_{\theta_1}(e=1|s)}{ p_{\theta_1}(e=1|s) + p_{\theta_0}(e=0|s)} \\
    &\theta_0 = \argmax_{\theta \in \Theta} \mathbb{E}_{(s_j, e_j) \sim \mathcal{D} \cup (s, e=0)}[\log p_{\theta}(e_j|s_j)] \\
    &\theta_1 = \argmax_{\theta \in \Theta} \mathbb{E}_{(s_j, e_j) \sim \mathcal{D} \cup (s, e=1)}[\log p_{\theta}(e_j|s_j)]
    \label{eqn:CNML-RL}
\end{align}

This reward is then used to perform policy updates, new data is collected with the updated policy, and the process is repeated. A full description can be found in Algorithm \ref{alg:cnml-alg}.

To illustrate how this change affects reward assignment during learning, we visualize a potential assignment of rewards with a CNML-based classifier on the problem described earlier in Fig~\ref{fig:failclassifier}. When the success classifier is trained with CNML instead of standard maximum likelihood, intermediate unseen states would receive non-zero rewards rather than simply having vanishing likelihoods like the maximum likelihood solution, thereby incentivizing exploration. In fact, the CNML likelihood has a strong connection to count-based exploration, as we show next. Additionally, we also see that CNML is able to provide more directed shaping towards the successful outcomes when generalization exists across states, as explained below. 

\subsection{Relationship to Count-Based Exploration} 
In this section we relate the success likelihoods obtained via CNML to commonly used exploration methods based on counts.  Formally, we prove that the success classifier trained with CNML is equivalent to a version of count-based exploration in the absence of any generalization across states (i.e., a fully tabular setting).

\begin{theorem}
\label{cnml-counts}
Suppose we are estimating success probabilities $p(e=1\vert s)$ in the tabular setting, where we have an independent parameter for each state.
Let $N(s)$ denote the number of times state $s$ has been visited by the policy, and let $G(s)$ be the number of occurrences of state $s$ in the set of positive examples.
Then the CNML success probability $p_{\text{CNML}}(e=1\vert s)$ is equal to $\frac{G(s)+1}{N(s)+G(s)+2}$.
For states that are not represented in the positive examples, i.e. G(s) = 0, we then recover inverse counts $\frac{1}{N(s) + 2}.$
\end{theorem}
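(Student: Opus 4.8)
The plan is to exploit the fact that in the tabular setting the model class factorizes over states, so the two maximum likelihood problems defining the CNML distribution decouple across states and each reduces to a one-dimensional Bernoulli estimate, i.e. to counting. Concretely, I would proceed in three short steps.

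First, I would observe that since there is an independent parameter $\theta_s$ for each state, the log-likelihood of any dataset splits into a sum of per-state terms, and the maximizer over $\theta_s$ depends only on the labels attached to copies of $s$ in that dataset; in particular the predicted probability at the query state $s$ is unaffected by the values of $\theta_{s'}$ for $s' \neq s$. In the dataset $\mathcal{D}$ built from $\mathcal{S}_{+}$ (label $e=1$) and $\mathcal{S}_{-}$ (label $e=0$), the state $s$ appears with label $1$ exactly $G(s)$ times and with label $0$ exactly $N(s)$ times.

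Second, I would compute the two augmented MLEs evaluated at $s$. For the augmented dataset $\mathcal{D}\cup(s,e=1)$, state $s$ carries $G(s)+1$ positive labels and $N(s)$ negative labels, so the Bernoulli MLE is $\theta_1(s) = \tfrac{G(s)+1}{G(s)+N(s)+1}$, hence $p_{\theta_1}(e=1\vert s) = \tfrac{G(s)+1}{N(s)+G(s)+1}$. Symmetrically, for $\mathcal{D}\cup(s,e=0)$ one gets $p_{\theta_0}(e=0\vert s) = \tfrac{N(s)+1}{N(s)+G(s)+1}$. Plugging these into the CNML normalization in \eqref{eqn:CNML-RL}, the common denominator $N(s)+G(s)+1$ cancels and leaves $p_{\text{CNML}}(e=1\vert s) = \tfrac{G(s)+1}{(G(s)+1)+(N(s)+1)} = \tfrac{G(s)+1}{N(s)+G(s)+2}$; specializing to $G(s)=0$ gives the inverse-count expression $\tfrac{1}{N(s)+2}$.

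The only subtlety — and it is minor — is the bookkeeping of the counts, together with checking that each augmented MLE is well defined (it is, because adding the query point guarantees at least one observation at $s$, so the Bernoulli likelihood has a maximizer). Everything else rests on the elementary fact that the MLE of a Bernoulli parameter is the empirical success frequency, so no genuine obstacle is expected; the result can be read as a ``Laplace/add-one smoothing'' identity falling out of the CNML construction.
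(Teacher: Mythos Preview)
Your proposal is correct and follows essentially the same route as the paper: both arguments use the tabular factorization to reduce each augmented MLE to a per-state Bernoulli count, obtain $p_{\theta_1}(e=1\vert s)=\tfrac{G(s)+1}{N(s)+G(s)+1}$ and $p_{\theta_0}(e=0\vert s)=\tfrac{N(s)+1}{N(s)+G(s)+1}$, and then normalize. Your write-up is slightly more explicit about why the problem decouples across states and why the augmented MLE is always well defined, but the substance is identical.
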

Refer to Appendix A.1
for a full proof. While CNML has a strong connection with counts as described above, it is important to note two advantages. First, the rewards are estimated without an explicit generative model, simply by using a standard discriminative model trained via CNML. Second, in the presence of generalization via function approximation, the exploration behavior from CNML can be significantly more task directed, as described next.  

In most problems, when the classifier is parameterized by a function approximator with non-trivial generalization, the structure of the state space actually provides more information to guide the agent towards the successful examples than simply using counts. In most environments~\citep{openaigym, yu2019metaworld} states are not completely uncorrelated, but instead lie in a representation space where generalization correlates with the dynamics structure in the environment. For instance, states from which successful outcomes can be reached more easily (i.e., states that are ``close'' to successful outcomes) are likely to have similar representations. Since the uncertainty-aware classifier described in Section~\ref{sec:nml_classifier} is built on top of such features and is trained with knowledge of the desired successful outcomes, it is able to incentivize more \textit{task-aware} directed exploration than simply using counts. This phenomenon is illustrated intuitively in Fig~\ref{fig:failclassifier}, and demonstrated empirically in our experimental analysis in Section~\ref{sec:experiments}.

\section{\METHOD: Training Uncertainty-Aware Success Classifiers for Outcome Driven RL via Meta-Learning and CNML}
\label{sec:meta-nml}
In Section~\ref{sec:method}, we discussed how success classifiers trained via CNML can incentivize exploration and provide reward shaping to guide RL. However, the reward inference technique via CNML described in Section~\ref{sec:nml_classifier} is in most cases computationally intractable, as it requires optimizing separate maximum likelihood estimation problems to convergence on every data point we want to query. In this section, we describe a novel approximation that allows us to apply this method in practice.

\begin{figure*}[t]
    \centering
    \includegraphics[width=0.9\textwidth]{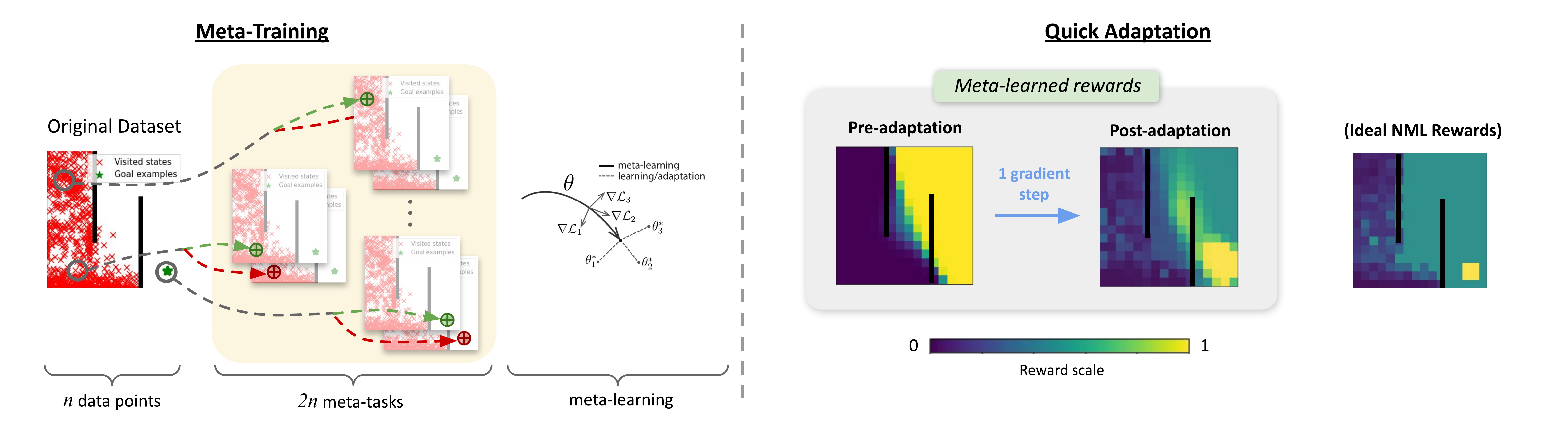}
    \caption{\footnotesize{Diagram of using meta-NML to train a classifier. Meta-NML learns an initialization that can quickly adapt to new datapoints with arbitrary labels. At evaluation time, it approximates the NML probabilities (right) fairly well with a single gradient step.}}
    \label{fig:meta-nml-flow}
    \vspace{-0.25cm}
\end{figure*}

\subsection{Meta-Learning for CNML}
We adopt ideas from meta-learning to amortize the cost of obtaining the CNML distribution. As noted in Section~\ref{sec:nml_classifier}, computing the CNML distribution involves repeatedly solving maximum likelihood problems. While computationally daunting, these problems share a significant amount of common structure, which we can exploit to estimate the CNML distribution more efficiently. Meta-learning uses a distribution of training problems to explicitly meta-train models that can quickly adapt to new problems and, as we show next, can be directly used to accelerate CNML.

To apply meta-learning to computing the CNML distribution, we can formulate each of the maximum likelihood problems described in Equation~\ref{eqn:CNML} as a separate task for meta-learning, and apply a standard meta-learning technique to obtain a model capable of few-shot adaptation to the maximum likelihood problems required for CNML. While any meta-learning algorithm is applicable, we found model agnostic meta-learning (MAML)~\cite{finn17icml} to be effective. MAML aims to meta-train a model that can quickly adapt to new tasks via a few steps of gradient descent by explicitly performing a bi-level optimization. We refer readers to ~\citet{finn17icml} for a detailed overview.

The meta-training procedure to enable quick querying of CNML likelihoods can be described as follows. Given a dataset $\mathcal{D} = \{(\mathbf{x_0}, \mathbf{y_0}), (\mathbf{x_1}, \mathbf{y_1}), ..., (\mathbf{x_n}, \mathbf{y_n})\}$, we construct $2n$ different tasks $\tau_i$, each corresponding to performing maximum likelihood estimation on the original dataset $\mathcal{D}$ combined with an additional point $(\mathbf{x_i}, \mathbf{y'})$, where $y'$ is a proposed label of either 0 or 1 and $\mathbf{x_i}$ is a point from the dataset $\mathcal{D}$. Given these constructed tasks $\mathcal{S}(\tau)$, we perform meta-training as described by ~\citet{finn17icml}:

\begin{align}
\vspace{-0.3cm}
&\max_{\theta} \hspace{0.1cm} \mathbb{E}_{\mathbf{x_i} \sim \mathcal{D}, y'. \in \{0, 1\}}[\mathcal{L}(\mathcal{D} \cup (\mathbf{x_i}, \mathbf{y'}) , \theta_i')],\hspace{0.4cm} \\ &s.t \hspace{0.2cm} \theta_i' = \theta - \alpha \nabla_{\theta} \mathcal{L}(\mathcal{D} \cup (\mathbf{x_i}, \mathbf{y'}), \theta).
\label{eqn:meta-nml-training}
\vspace{-0.1cm}
\end{align}
This training procedure produces a set of parameters $\theta$ that can then be quickly adapted to provide the CNML distribution with a step of gradient descent. The model can be queried for the CNML distribution by starting from the meta-learned $\theta$ and taking one step of gradient descent for the dataset augmented with the query point, each with a different potential label. These likelihoods are then normalized to provide the CNML distribution as follows:
\begin{align}
&p_{\text{meta-NML}}(y|x; \mathcal{D}) = \frac{p_{\theta_y}(y|x)}{\sum_{y \in \mathcal{Y}}p_{\theta_y}(y|x)} 
\\ &\theta_y = \theta - \alpha \nabla_\theta \mathbb{E}_{(x_i, y_i) \sim \mathcal{D} \cup (x, y)}[\mathcal{L}(x_i, y_i, \theta)].
\label{eqn:meta-nml-testing}
\end{align}
This process is illustrated in Fig~\ref{fig:meta-nml-flow}, which shows how the meta-NML procedure can be used to obtain approximate CNML likelihoods with just a single gradient step. 

This scheme for amortizing the computational challenges of NML (which we call \emph{meta-NML}) allows us to obtain normalized likelihood estimates without having to retrain maximum likelihood to convergence at every single query point. A complete description, runtime analysis and pseudocode of this algorithm are provided in Appendix A.2 and A.3. Crucially, we find that meta-NML is able to approximate the CNML outputs well with just one or a few gradient steps, making it several orders of magnitude faster than standard CNML.

\begin{figure*}[!h]
\centering
    \begin{subfigure}[b]{0.13\textwidth}
        \centering
        \includegraphics[width=\textwidth]{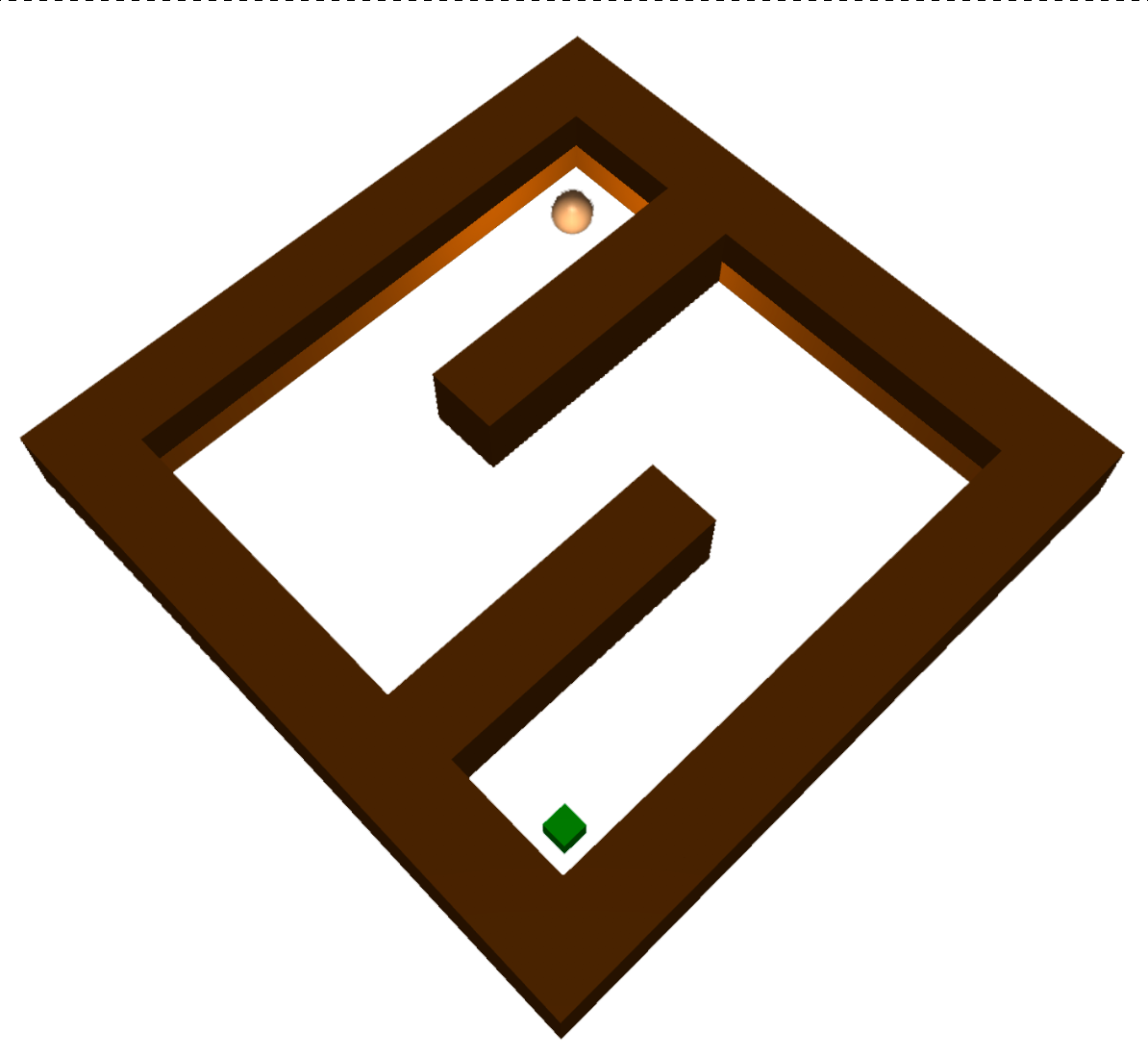}
        \caption{Zigzag Maze}
        \label{fig:s-maze-pic}
    \end{subfigure}
    \begin{subfigure}[b]{0.13\textwidth}
        \centering
        \includegraphics[width=\textwidth]{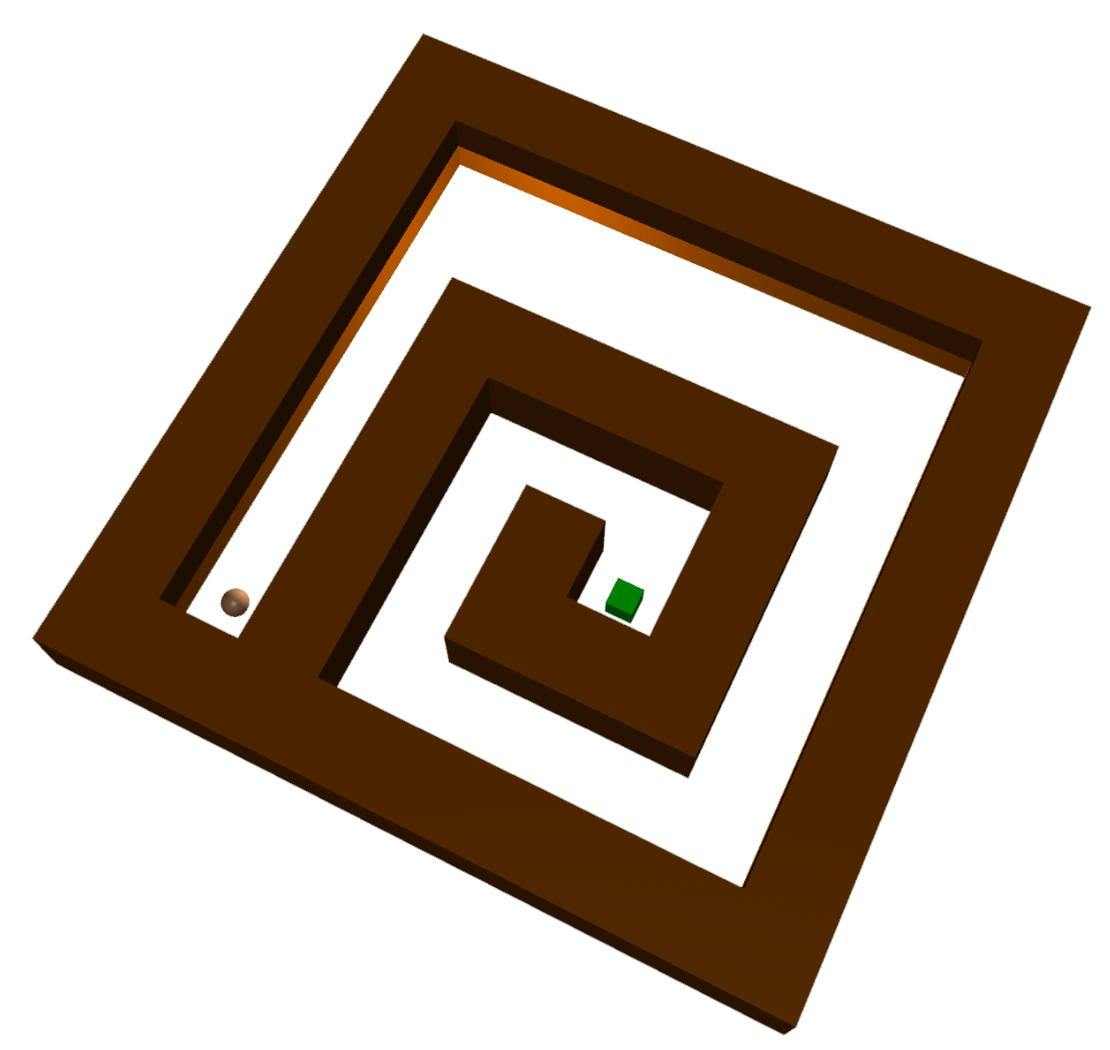}
        \caption{Spiral Maze}
        \label{fig:spiral-maze-pic}
    \end{subfigure}
    \begin{subfigure}[b]{0.13\textwidth}
        \centering
        \includegraphics[width=\textwidth]{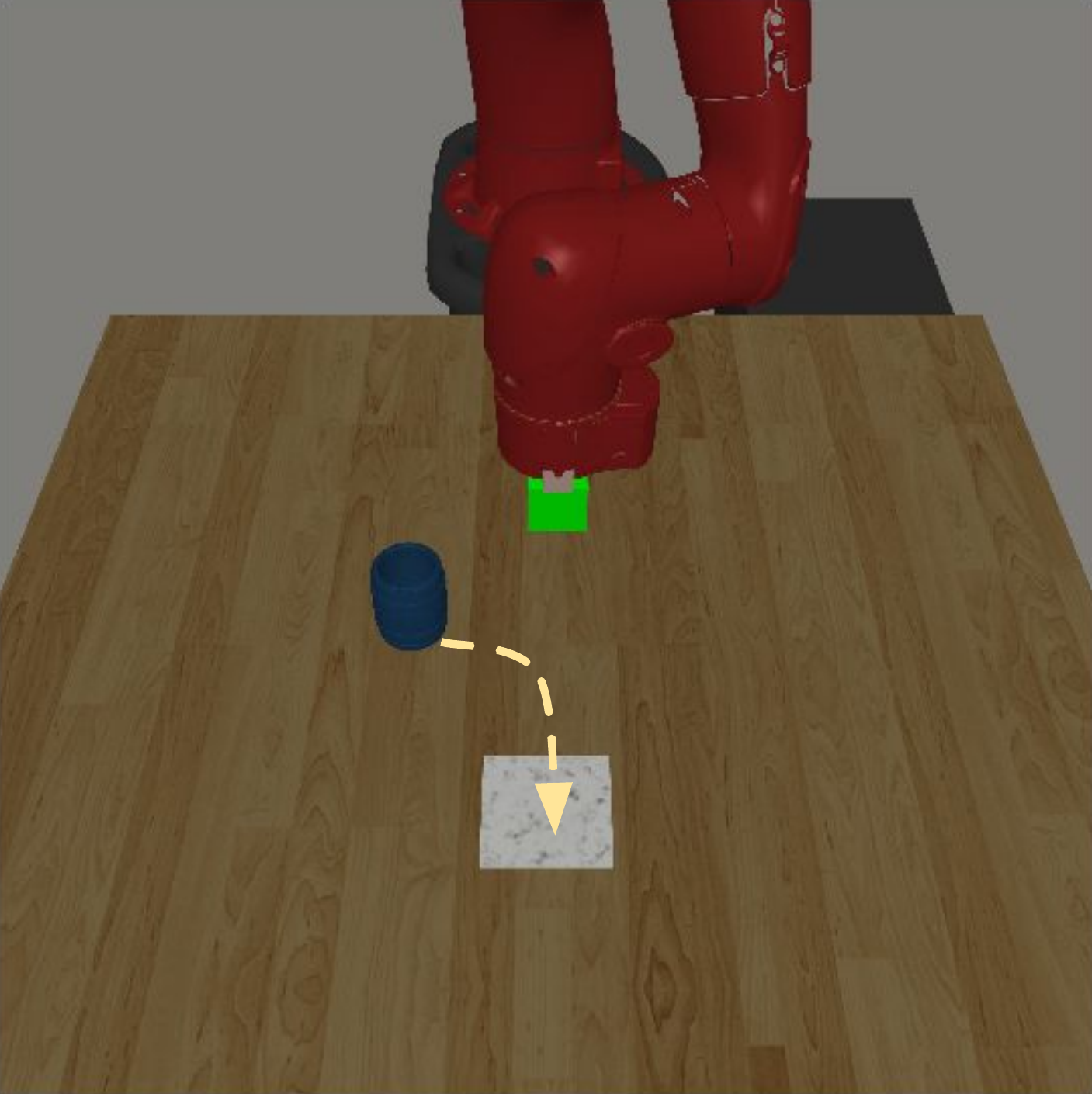}
        \caption{Sawyer Push}
        \label{fig:push-pic}
    \end{subfigure}
    \begin{subfigure}[b]{0.13\textwidth}
        \centering
        \includegraphics[width=\textwidth]{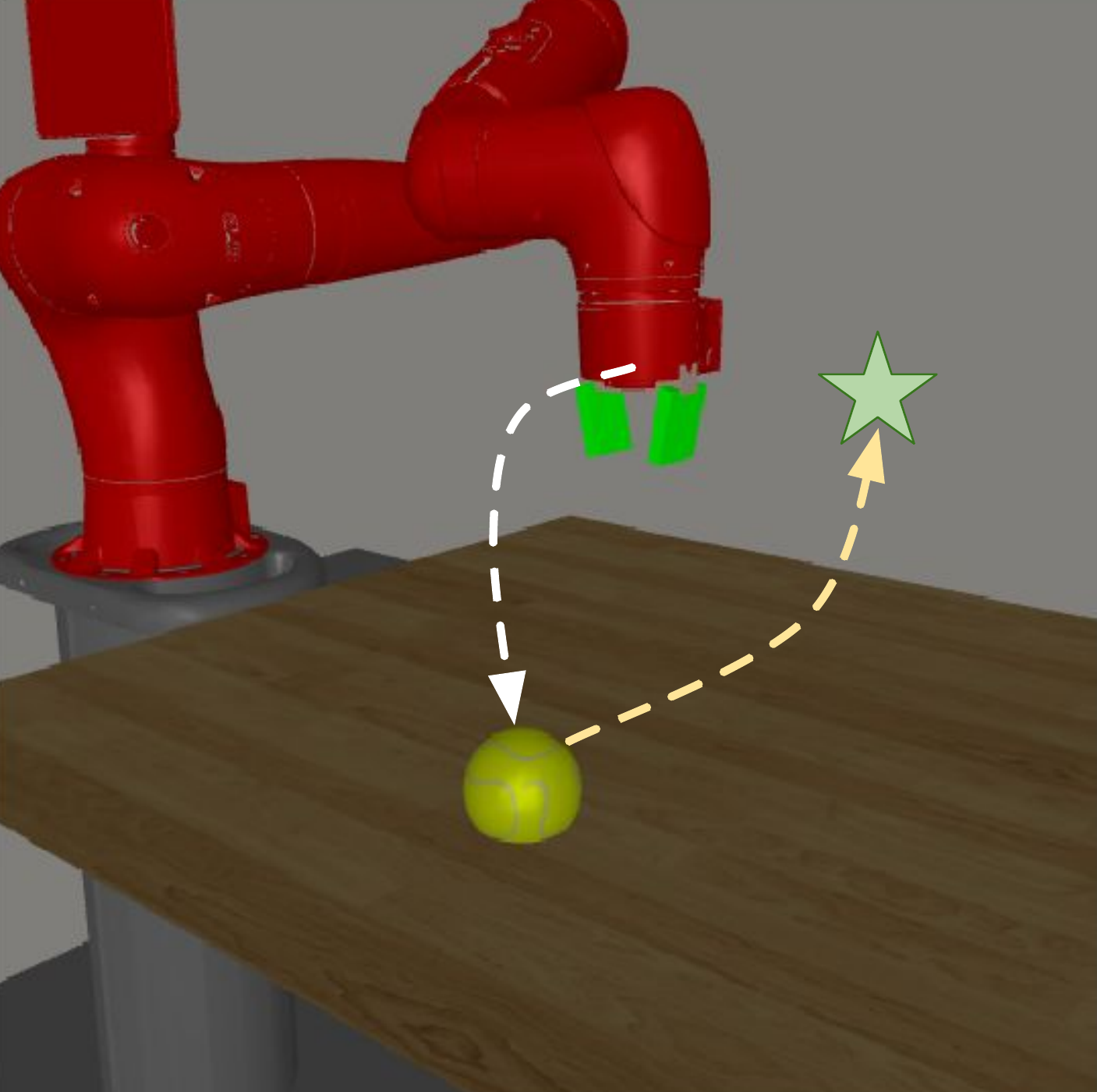}
        \caption{Sawyer Pick}
        \label{fig:pnp-pic}
    \end{subfigure}
    \begin{subfigure}[b]{0.13\textwidth}
        \centering
        \includegraphics[width=\textwidth]{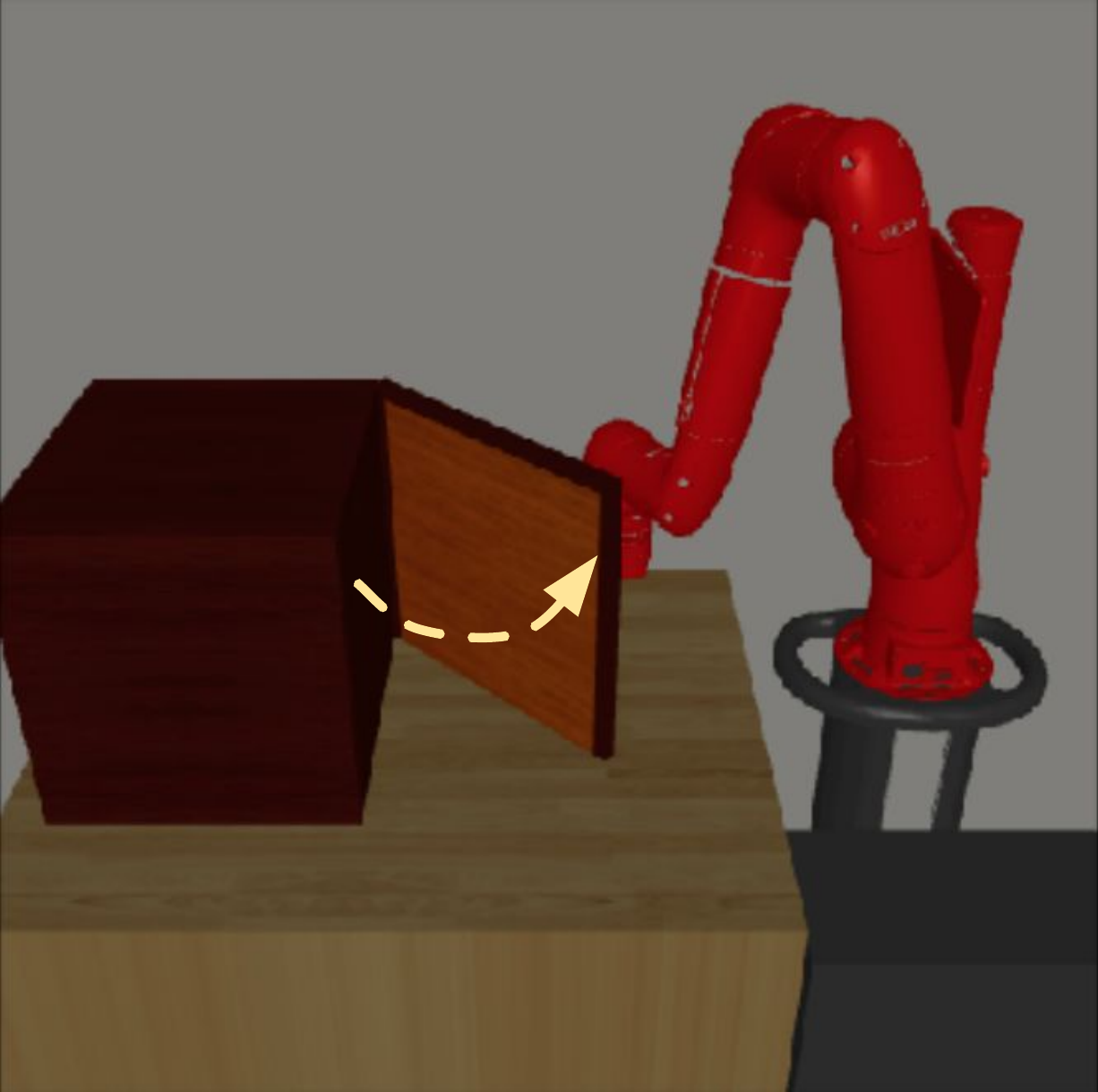}
        \caption{Sawyer Door}
        \label{fig:door-pic}
    \end{subfigure}
    \begin{subfigure}[b]{0.13\textwidth}
        \centering
        \includegraphics[width=\textwidth]{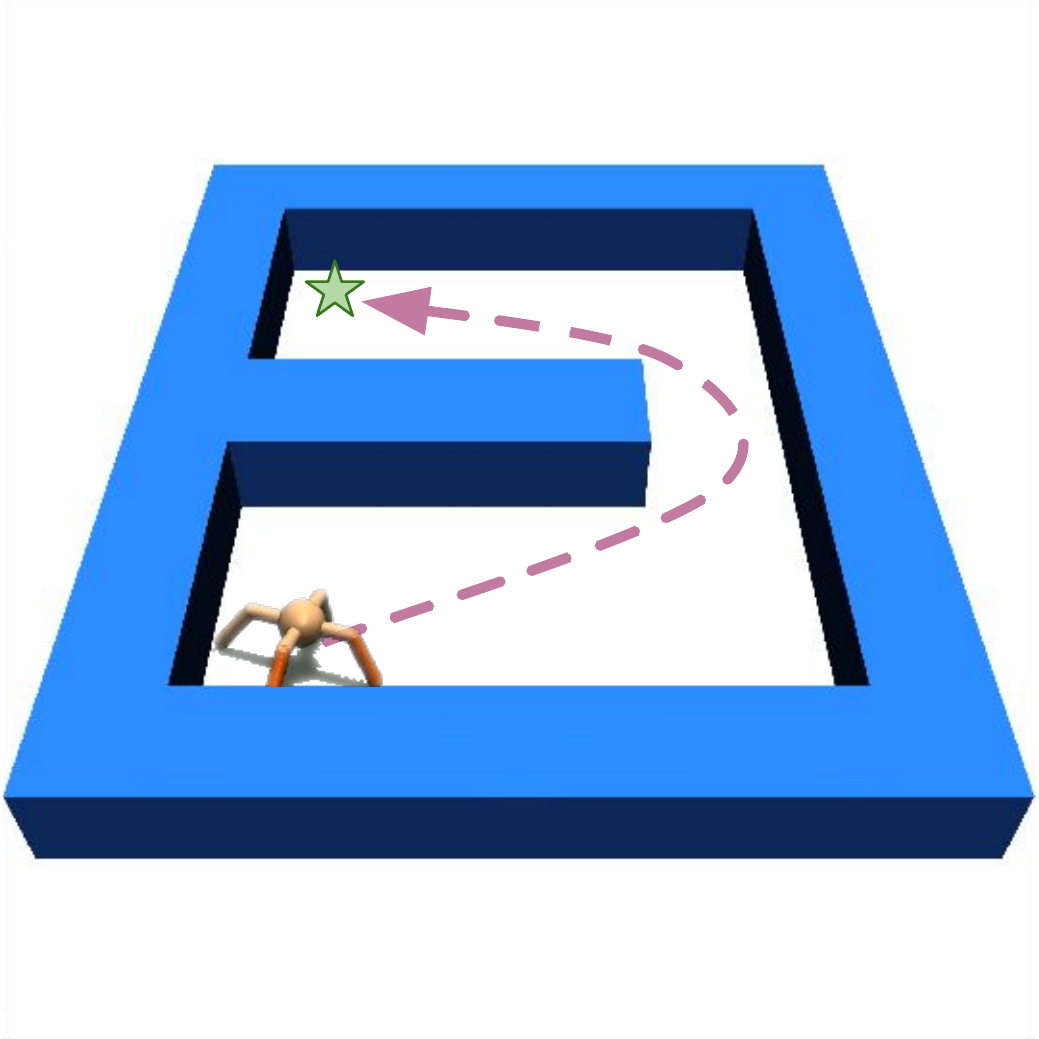}
        \caption{Locomotion}
        \label{fig:ant-pic}
    \end{subfigure}
        \begin{subfigure}[b]{0.13\textwidth}
        \centering
        \includegraphics[width=\textwidth]{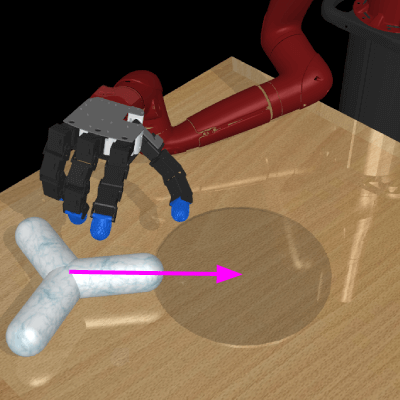}
        \caption{Dex. Hand}
        \label{fig:hand-pic}
    \end{subfigure}
    \caption{\footnotesize{We evaluate on two mazes, three robotic arm manipulation tasks, one locomotion task and one dexterous manipulation task: (a) the agent must navigate around an S-shaped corridor, (b) the agent must navigate a spiral corridor, (c) the robot must push a puck to location, (d) the robot must raise a randomly placed tennis ball to location, (e) the robot must open the door a specified angle.  (f) the quadruped ant must navigate the maze to a particular location (g) the dexterous robotic hand must reposition an object on the table.}}
        \label{fig:envs}
    \vspace{-0.3cm}
\end{figure*}

\subsection{Applying Meta-NML to Success Classification}

\begin{figure}[!h]
\vspace{-0.2cm}
\begin{minipage}{\linewidth}
    \begin{algorithm}[H]
      	\caption{\METHOD: Meta-learning Uncertainty-aware Rewards for Automated Outcome-driven RL}
      	\label{alg:baycrl-alg}
      	\begin{algorithmic}[1]
      	\STATE User provides success examples $\mathcal{S}_{+}$
      	 \STATE Initialize policy $\pi$, replay buffer $\mathcal{S}_{-}$, and reward classifier parameters $\theta_{\mathcal{R}}$
      	\FOR{iteration $i=1, 2, ...$}
      	    \STATE Add on-policy samples to $\mathcal{S}_{-}$ by executing $\pi$.
            \IF {iteration $i \mod k$ == 0}
                \STATE Sample $n_\text{train}$ states from $\mathcal{S}_{-}$ to create $2n_\text{train}$ meta-training tasks
                \STATE Sample $n_\text{test}$ total test points equally from $\mathcal{S}_{+}$ (label 1) and $\mathcal{S}_{-}$ (label 0)
                \STATE Meta-train $\theta_{\mathcal{R}}$ via meta-NML using Equation~\ref{eqn:meta-nml-training}
            \ENDIF
            \STATE Assign state rewards via Equation~\ref{eqn:CNML-RL}
            \STATE Train $\pi$ with RL algorithm
      	\ENDFOR
      	\end{algorithmic}
    \end{algorithm}
\end{minipage}
\vspace{-0.3cm}
\end{figure}

We apply the meta-NML algorithm described previously to learning uncertainty-aware success classifiers for providing rewards for RL in our proposed algorithm, which we call \METHOD. Similarly to \citet{vice18fu}, we can train our classifier by first constructing a dataset $\mathcal{D}$ for binary classification, using success examples as positives, and on-policy samples as negatives, balancing the number of sampled positives and negatives in the dataset. Given this dataset, the classifier parameters $\theta_{\mathcal{R}}$ can be trained via meta-NML as described in Equation~\ref{eqn:meta-nml-training}. The classifier can then be used to directly and quickly assign rewards to a state $s$ according to its probabilities $r(s) = p_{\text{meta-NML}}(e=1|s)$, and perform standard reinforcement learning, as noted in Algorithm~\ref{alg:baycrl-alg}. Further details are in Appendix A.2. 

\section{Experimental Evaluation}
\label{sec:experiments}

In our experimental evaluation we aim to answer the following questions: (1) Can \METHOD make effective use of successful outcome examples to solve challenging exploration tasks? (2) Does \METHOD scale to dynamically complex tasks? (3) What are the impacts of different design decisions on the effectiveness of \METHOD?

\begin{figure*}[!h]
\vspace{-0.1cm}
    \centering
    \includegraphics[width=0.8\textwidth]{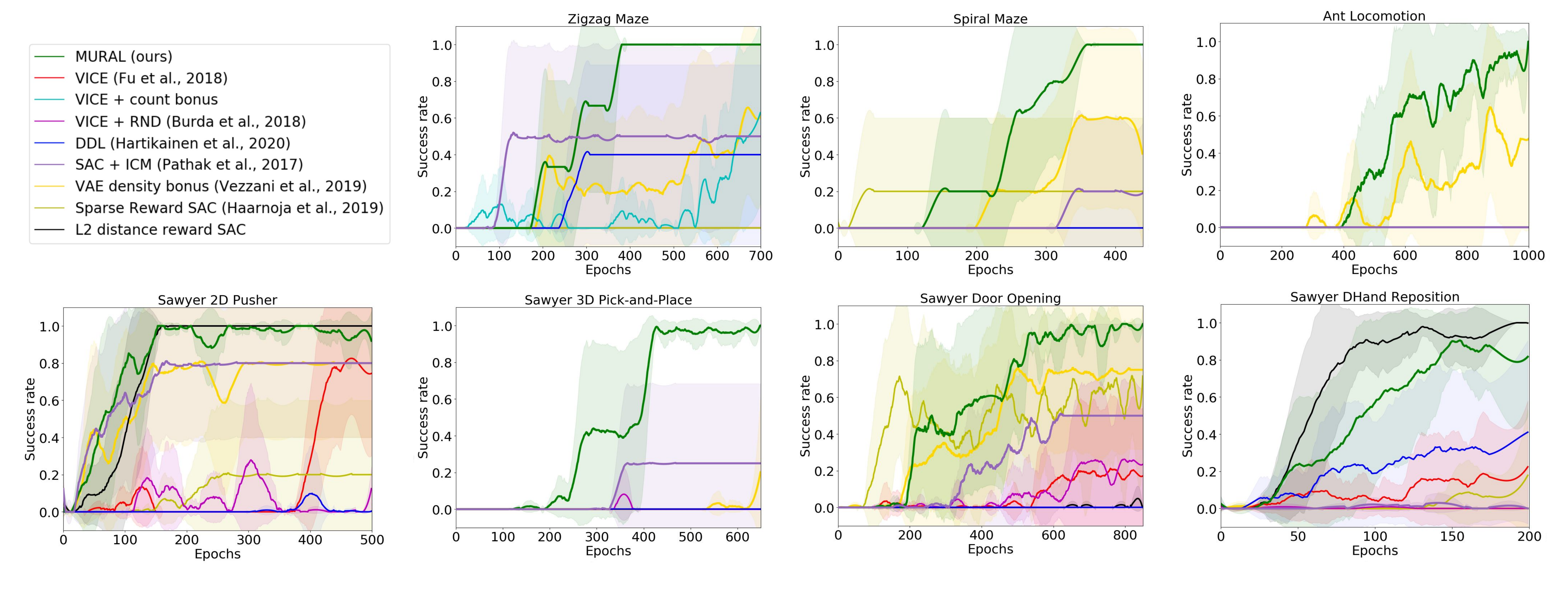}
    \vspace{-0.2cm}
    \caption{\footnotesize{\METHOD outperforms prior goal-reaching and exploration methods on all our evaluation environments, including ones with high-dimensional state and action spaces. \METHOD also performs comparably to or better than a heuristically shaped hand-designed reward that uses Euclidean distance (black line), demonstrating that designing a well-shaped reward is not trivial in these domains. Shading indicates a standard deviation across 5 seeds. For details on the success metrics used, see Appendix A.4.
    }}
    \label{fig:comparisons}
\end{figure*}

Further details, videos, and code can be found at \mbox{\url{https://sites.google.com/view/mural-rl}}

\subsection{Experimental Setup}

We first evaluate our method on maze navigation problems, which require avoiding several local optima. Then, we consider three robotic manipulation tasks that were previously covered in ~\citet{viceraq} with a Sawyer robot arm: door opening, tabletop object pushing, and 3D object picking. We also evaluate on a previously considered locomotion task ~\cite{pong2020skew} with a quadruped ant navigating to a target, as well as a dexterous manipulation problem with a robot repositioning an object with a multi-fingered hand. In the hand manipulation experiments, the classifier is provided with access to only the object position, while in the other tasks the classifier is provided the entire Markovian state. As we show in our results, exploration in these environments is challenging, and using na\"{i}vely chosen reward shaping often does not solve the problem at hand. 

We compare with a number of prior algorithms. To provide a comparison with a previous method that uses standard success classifiers, we include the VICE algorithm~\citep{vice18fu}. Note that this algorithm is quite related to \METHOD, but it uses a standard maximum likelihood classifier rather than a classifier trained with CNML and meta-learning. We also include a comparison with DDL, a technique for learning dynamical distances~\citep{ddl}. We additionally include comparisons to algorithms for task-agnostic exploration to show that \METHOD performs more directed exploration and reward shaping. To provide a direct comparison, we use the same VICE method for training classifiers, but combine it with novelty-based exploration based on random network distillation~\citep{rnd} for the robotic manipulation tasks, and oracle inverse count bonuses for maze navigation. We also compare to prior task-agnostic exploration techniques which use intrinsic curiosity ~\citep{pathak2017curiosity} and density estimates ~\citep{vezzanivae}. Finally, to demonstrate the importance of shaped rewards, we compare to running Soft Actor-Critic~\citep{sac} with two na\"{i}ve reward functions: a sparse reward, and a heuristic reward which uses L2 distance. More details are included in Appendix A.4 and A.6.

\subsection{Comparisons with Prior Algorithms}
We compare with prior algorithms on the domains described above. As we can see in Fig~\ref{fig:comparisons}, \METHOD is able to very quickly learn how to solve these challenging exploration tasks, often reaching better asymptotic performance than most prior methods, and doing so more efficiently than VICE~\citep{vice18fu} or DDL~\citep{ddl}. This suggests that \METHOD is able to provide directed reward shaping and exploration that is substantially better than standard classifier-based methods. We provide a more detailed analysis of the shaping behavior of the learned reward in Section~\ref{sec:exp-viz}.

To isolate whether the benefits purely come from exploration or also from task-aware reward shaping, we compare with methods that only perform task-agnostic exploration. From these comparisons, it is clear that \METHOD significantly outperforms methods that only use novelty-seeking exploration. We also compare to a heuristically-designed reward function based on Euclidean distance. \METHOD generally outperforms simple manual shaping in terms of sample complexity and asymptotic performance, indicating that the learned shaping is non-trivial and adapted to the task. Of course, with sufficient domain knowledge, it is likely that this would improve. In addition, we find that \METHOD scales up to tasks with challenging exploration in higher dimensional state and action spaces such as quadruped locomotion and dexterous manipulation, as seen in Fig~\ref{fig:comparisons}.

\subsection{Ablations}
We first evaluate the importance of meta-learning for estimating the CNML distribution. In Figure~\ref{fig:ablations}, we see that na\"{i}vely estimating the CNML distribution by taking a single gradient step and following the same process as in our method, but without any meta-training, results in much worse performance.
Second, we analyze whether the exploration behavior incentivized by \METHOD is actually directed and task-aware or if it simply approximates count-based exploration.
To that end, we modify the training procedure so that the dataset $\mathcal{D}$ consists of only the on-policy negatives, and add the inferred reward from the meta-NML classifier to the reward obtained by a standard maximum likelihood classifier (similarly to the VICE+RND baseline). We see that this performs poorly, showing that the \METHOD classifier is doing more than just performing count-based exploration, and benefits from better reward shaping due to the success examples. Further ablations are available in Appendix A.5.

\begin{wrapfigure}{r}{0.45\linewidth}
    \centering
    \includegraphics[width=\linewidth]{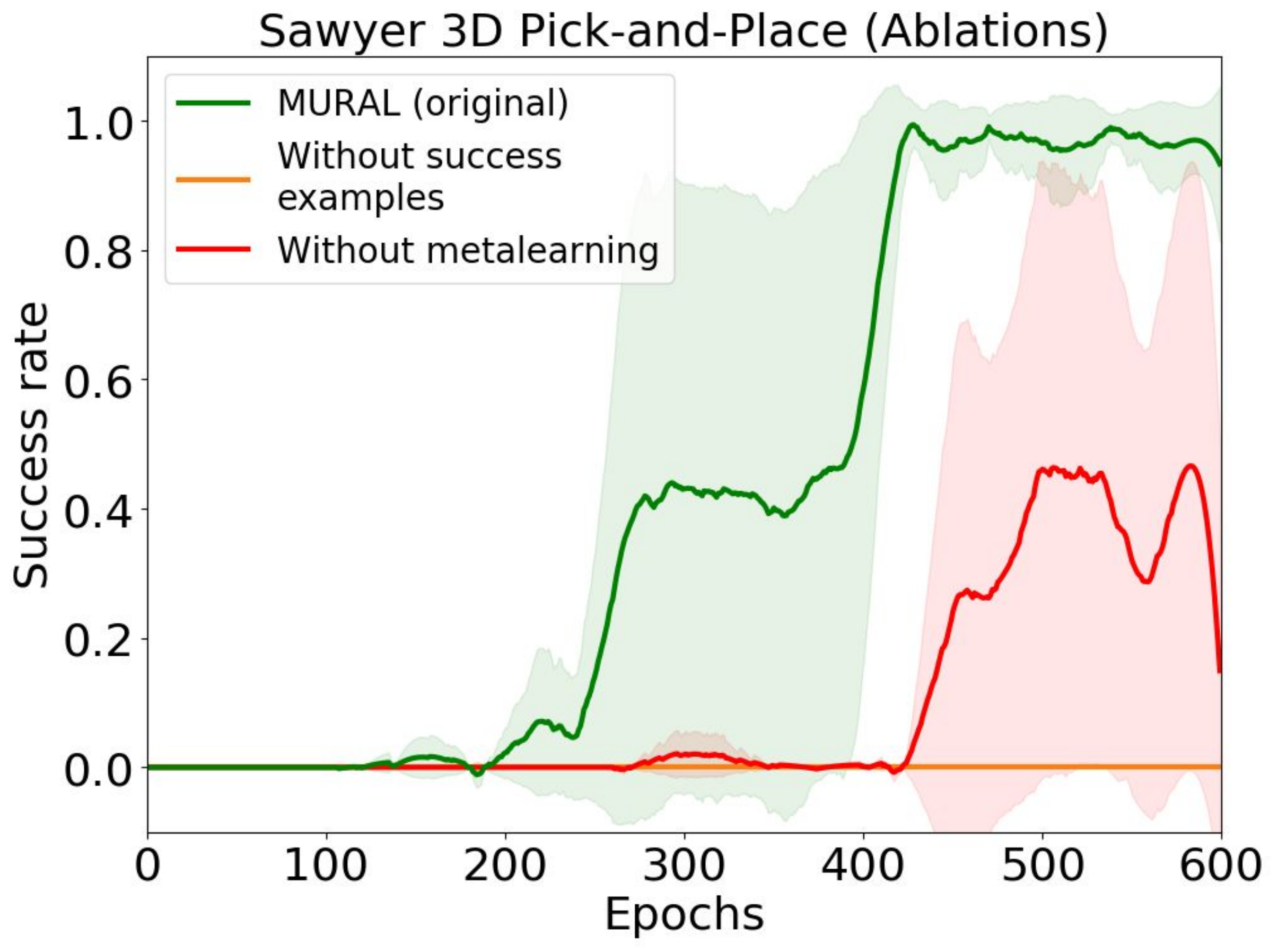}
    \caption{{\footnotesize Ablative analysis of \METHOD. The amortization from meta-learning and access to positive examples are both important components for performance.}}
    \label{fig:ablations}
    \vspace{-0.4cm}
\end{wrapfigure}

\subsection{Analysis of \METHOD}
\label{sec:exp-viz}

\textbf{\METHOD and reward shaping.} To better understand how \METHOD provides reward shaping, we visualize the rewards for various slices along the $z$ axis on the Sawyer Pick-and-Place task, an environment which presents a significant exploration challenge. In Fig~\ref{fig:shaping} we see that the \METHOD rewards clearly correlate with the distance to the mean object position in successful outcomes, shown as a white star, thus guiding the robot to raise the ball to the desired location even if it has never reached this before. In contrast, the maximum likelihood classifier has a sharp, poorly-shaped decision boundary.

\begin{figure}[!h]
    \centering
    \includegraphics[width=0.95\linewidth]{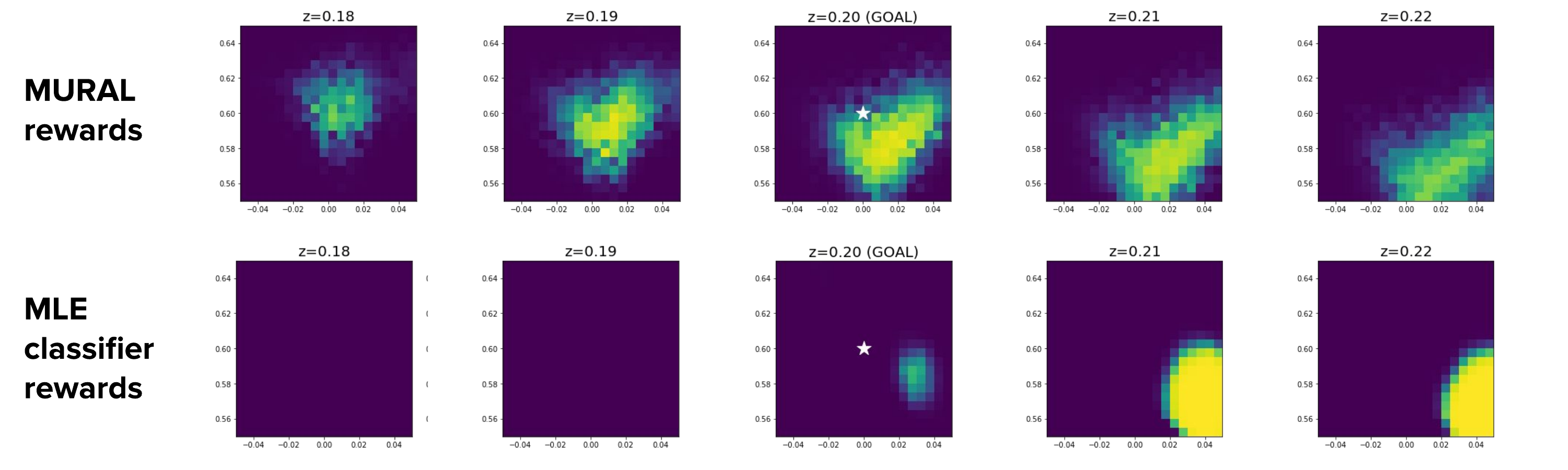}
    \vspace{-0.3cm}
    \caption{\footnotesize{Visualization of reward shaping for 3D Pick-and-Place at various z values (heights). \METHOD learns rewards that provide a smooth slope toward the successful outcomes, while the MLE classifier learns a sharp and poorly shaped decision boundary.}}
    \label{fig:shaping}
\end{figure}
\textbf{\METHOD and exploration.} Next, to illustrate the connection between \METHOD and exploration, we compare the states visited by \METHOD and by VICE \citep{vice18fu} in Figure~\ref{fig:visitations}. We see that \METHOD naturally incentivizes the agent to visit novel states, allowing it to navigate around local minima. In contrast, VICE learns a misleading reward function that prioritizes closeness to the success outcomes in $xy$ space, causing the agent to get stuck.

Interestingly, despite incentivizing exploration, \METHOD does not simply visit all possible states; at convergence, it has only covered around 70\% of the state space. In fact, in Figure~\ref{fig:visitations}, \METHOD prioritizes states that bring it closer to the success outcomes and ignores ones that don't, making use of the positive examples provided to it. This suggests that \METHOD benefits from both novelty-seeking behavior and effective reward shaping.  

\begin{figure}[!h]
    \centering
    \includegraphics[width=0.9\linewidth]{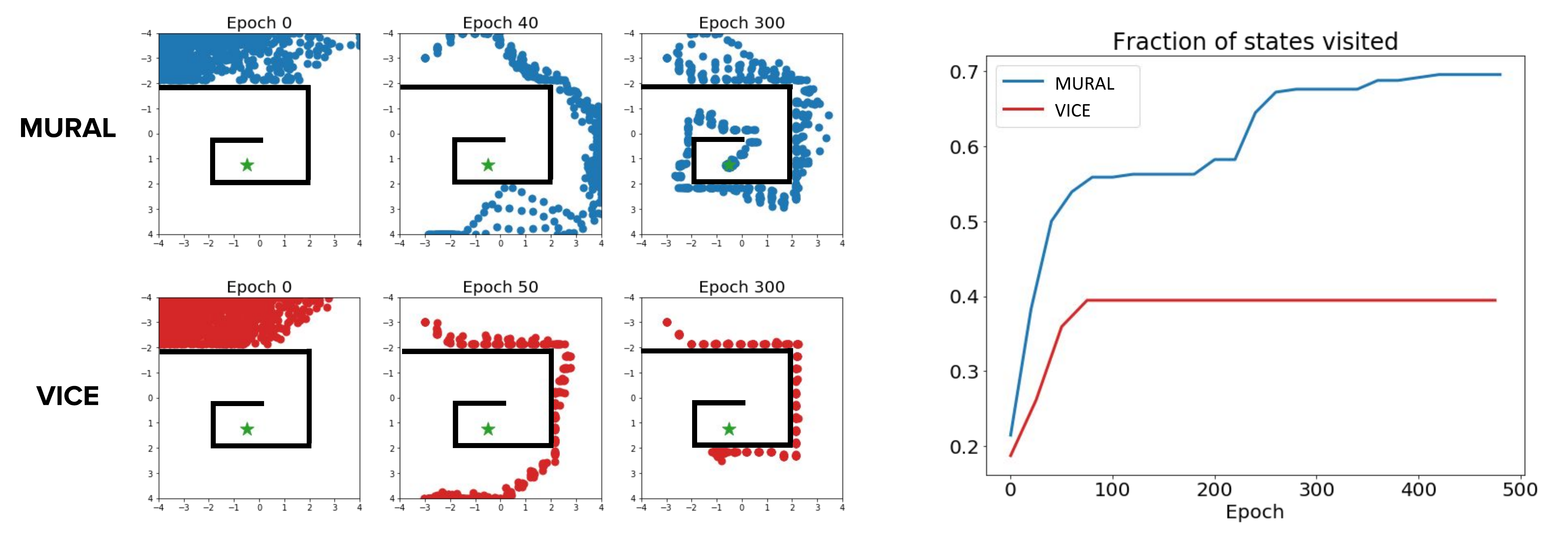}
    \caption{\footnotesize{Plots of visitations and state coverage over time for \METHOD vs. VICE. \METHOD explores a significantly larger portion of the state space and is able to avoid local optima.}}
    \label{fig:visitations}
    \vspace{-0.3cm}
\end{figure}
\section{Discussion}
In this work, we consider a subclass of RL problems where examples of successful outcomes specify the task. We analyze how solutions via standard success classifiers suffer from shortcomings, and training classifiers via CNML allows for better exploration to solve challenging problems. To make learning tractable, we propose a novel meta-learning approach to amortize the CNML process. While this work has shown the effectiveness of Bayesian classifiers for reward inference for tasks in simulation, it would be interesting to scale this solution to real world problems.

\section{Acknowledgements}
The authors would like to thank Aviral Kumar, Justin Fu, Sham Kakade, Aldo Pacchiano, Pieter Abbeel, Avi Singh, Benjamin Eysenbach, Ignasi Clavera for valuable discussion and feedback on early drafts of this work. This research was supported by the Office of Naval Research and the National Science Foundation.

\bibliography{references}
\bibliographystyle{icml2021}

\clearpage
\appendix

\section{Appendix}

\subsection{Proof of Theorem 1 connecting NML and inverse counts}\label{appsec:proof}
We provide the proof of Theorem 1 here for completeness.
\begin{theorem}
Suppose we are estimating success probabilities $p(e=1\vert s)$ in the tabular setting, where we have a separate parameter independently for each state.
Let $N(s)$ denote the number of times state $s$ has been visited by the policy, and let $G(s)$ be the number of occurrences of state $s$ in the successful outcomes.
Then the CNML probability $p_{\text{CNML}}(e=1\vert s)$ is equal to $\frac{G(s)+1}{N(s)+G(s)+2}$. For states that are never observed to be successful, we then recover inverse counts $\frac{1}{N(s) + 2}.$
\end{theorem}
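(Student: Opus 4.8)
The plan is to exploit the fact that in the tabular setting the maximum likelihood problems defining CNML decouple across states, reducing each to a one-parameter Bernoulli estimation problem solvable in closed form. First I would fix notation: for each state $s$ introduce an independent parameter $\theta_s \in [0,1]$ with $p_{\theta}(e=1\mid s) = \theta_s$, so that the log-likelihood of any dataset factorizes as a sum over states, each summand depending only on the corresponding $\theta_s$ and on the number of label-$1$ and label-$0$ examples recorded at that state. In the dataset $\mathcal{D}$ used for reward inference (success examples as positives, on-policy samples as negatives), state $s$ appears with label $1$ exactly $G(s)$ times and with label $0$ exactly $N(s)$ times.

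Next I would carry out the two augmented maximum likelihood fits required by Equation~\ref{eqn:CNML-RL} for a query state $s$. For $\mathcal{D}\cup(s,e=1)$, the only relevant term of the log-likelihood is $(G(s)+1)\log\theta_s + N(s)\log(1-\theta_s)$, whose maximizer is the empirical frequency $\theta_1 = \frac{G(s)+1}{N(s)+G(s)+1}$; symmetrically, for $\mathcal{D}\cup(s,e=0)$ one gets $\theta_0 = \frac{G(s)}{N(s)+G(s)+1}$. Because the parameters at all other states are identical across the two fits and cancel under normalization, only these two numbers enter the final expression. I would also observe that the degenerate cases ($N(s)=0$ or $G(s)=0$) cause no trouble: each augmented dataset contains at least one example of the label whose probability appears in the numerator, so $p_{\theta_1}(e=1\mid s)>0$ and $p_{\theta_0}(e=0\mid s)>0$, keeping the CNML denominator strictly positive.

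Finally I would substitute into the CNML formula $p_{\text{CNML}}(e=1\mid s) = \frac{\theta_1}{\theta_1 + (1-\theta_0)}$. Since $\theta_1 = \frac{G(s)+1}{N(s)+G(s)+1}$ and $1-\theta_0 = \frac{N(s)+1}{N(s)+G(s)+1}$ share the denominator $N(s)+G(s)+1$, it cancels and the expression collapses to $\frac{G(s)+1}{(G(s)+1)+(N(s)+1)} = \frac{G(s)+1}{N(s)+G(s)+2}$. Setting $G(s)=0$ yields $\frac{1}{N(s)+2}$, the claimed inverse-count form.

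There is no serious obstacle here; the single point deserving care is the justification that the per-state maximum likelihood estimate is exactly the empirical label frequency and remains a legitimate maximizer over the closed interval $[0,1]$ (or a well-defined limit) in the boundary cases where a state has only positive or only negative examples in $\mathcal{D}$ — which is handled by the observation above that each augmented dataset always contributes at least one example of the label in question.
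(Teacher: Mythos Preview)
Your proposal is correct and follows essentially the same approach as the paper's own proof: decouple the tabular likelihood by state, compute the Bernoulli MLE on each of the two augmented datasets to get $\theta_1=\frac{G(s)+1}{N(s)+G(s)+1}$ and $1-\theta_0=\frac{N(s)+1}{N(s)+G(s)+1}$, and normalize. Your write-up is in fact a bit more careful than the paper's, explicitly noting the factorization across states and the harmless boundary cases, but the argument is the same.
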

\begin{proof}
In the fully tabular setting, our MLE estimates for $p(O\vert s)$ are simply given by finding the best parameter $p_s$ for each state. The proof then proceeds by simple calculation.

For a state with $n = N(s)$ negative occurrences and $g = G(s)$ positive occurrences, the MLE estimate is simply given by $\frac{g}{n+g}$.

Now for evaluating CNML, we consider appending another instance for each class.
The new parameter after appending a negative example is then $\frac{g}{n+g+1}$, which then assigns probability $\frac{n+1}{n+g+1}$ to the negative class.
Similarly, after appending a positive example, the new parameter is $\frac{g+1}{n+g+1}$, so we try to assign probability $\frac{g+1}{n+g+1}$ to the positive class.
Normalizing, we have 
\begin{align}
    p_{\text{CNML}}(O=1\vert s) = \frac{g+1}{n+g+2}.
\end{align}

When considering states that have only been visited on-policy, and are not included in the set of successful outcomes, then the likelihood reduces to 

\begin{align}
    p_{\text{CNML}}(O=1\vert s) = \frac{1}{n+2}.
\end{align}
\end{proof}

\subsection{Detailed Description of Meta-NML} \label{sec:detailed-pseudocode-meta-nml}
We provide a detailed description of the meta-NML algorithm described in Section~\ref{sec:meta-nml}, and the details of the practical algorithm.

\begin{figure}[!h]
    \centering
    \includegraphics[width=\linewidth]{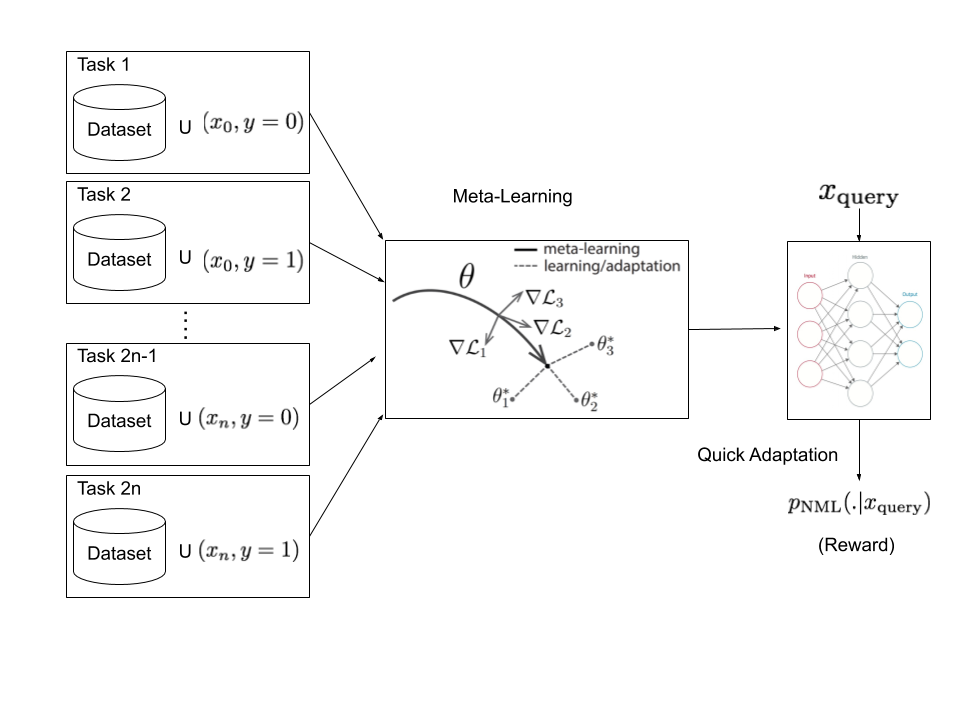}
    \caption{Figure illustrating the meta-training procedure for meta-NML.}
    \label{fig:meta-nml}
    \vspace{-0.2pt}
\end{figure}

Given a dataset $\mathcal{D} = \{(x_0, y_0), (x_1, y_1),..,(x_n, y_n)\}$, the meta-NML procedure proceeds by first constructing $k*n$ tasks from these data points, for a $k$ shot classification problem. We will keep $k = 2$ for simplicity in this description, in accordance with the setup of binary success classifiers in RL. Each task $\tau_i$ is constructed by augmenting the dataset with a negative label $\mathcal{D} \cup (x_i, y=0)$ or a positive label $\mathcal{D} \cup (x_i, y=1)$. Now that each task consists of solving the maximum likelihood problem for its augmented dataset, we can directly apply standard meta-learning algorithms to this setting. Building off the ideas in MAML~\citep{finn17icml}, we can then train a set of model parameters $\theta$ such that after a single step of gradient descent it can quickly adapt to the optimal solution for the MLE problem on any of the augmented datasets. This is more formally written as 

\begin{equation}
\max_{\theta} \hspace{0.1cm} \mathbb{E}_{\tau \sim \mathcal{S}(\tau)}[\mathcal{L}(\tau, \theta')],\hspace{0.4cm} s.t \hspace{0.2cm} \theta' = \theta - \alpha \nabla_{\theta} \mathcal{L}(\tau, \theta)
\label{eqn:meta-training-nml-appendix}
\end{equation}

where $\mathcal{L}$ represents a standard classification loss function, $\alpha$ is the learning rate, and the distribution of tasks $p(\tau)$ is constructed as described above. For a new query point $x$, these initial parameters can then quickly be adapted to provide the CNML distribution by taking a gradient step on each augmented dataset to obtain the approximately optimal MLE solution, and normalizing these as follows: 

\begin{align*}
&p_{\text{meta-NML}}(y|x; \mathcal{D}) = \frac{p_{\theta_y}(y|x)}{\sum_{y \in \mathcal{Y}}p_{\theta_y}(y|x)} \\ &\theta_y = \theta - \alpha \nabla_\theta \mathbb{E}_{(x_i, y_i) \sim \mathcal{D} \cup (x, y)}[\mathcal{L}(x_i, y_i, \theta)]
\label{eqn:meta-testing-nml-appendix}
\end{align*}

This algorithm in principle can be optimized using any standard stochastic optimization method such as SGD, as described in ~\citet{finn17icml}, backpropagating through the inner loop gradient update. For the specific problem setting that we consider, we additionally employ some optimization tricks in order to enable learning: 

\subsubsection{Importance Weighting on Query Point}
Since only one datapoint is augmented to the training set at query time for CNML, stochastic gradient descent can ignore this datapoint with increasing dataset sizes. For example, if we train on an augmented dataset of size 2048 by cycling through it in batch sizes of 32, then only 1 in 64 batches would include the query point itself and allow the model to adapt to the proposed label, while the others would lead to noise in the optimization process, potentially worsening the model's prediction on the query point.

In order to make sure the optimization considers the query point, we include the query point and proposed label $(x_q, y)$ in every minibatch that is sampled, but downweight the loss computed on that point such that the overall objective remains unbiased. This is simply doing importance weighting, with the query point downweighted by a factor of $\lceil \frac{b - 1}{N} \rceil$ where $b$ is the desired batch size and $N$ is the total number of points in the original dataset.

To see why the optimization objective remains the same, we can consider the overall loss over the dataset. Let $f_\theta$ be our classifier, $\mathcal{L}$ be our loss function, $\mathcal{D'} = \{(x_i, y_i)\}_{i=1}^N \cup (x_q, y)$ be our augmented dataset, and $\mathcal{B}_k$ be the $k$th batch seen during training. Using standard SGD training that cycles through batches in the dataset, the overall loss on the augmented dataset would be:

$$\mathcal{L}(\mathcal{D'}) = \left( \sum \limits_{i=0}^{N} \mathcal{L}(f_\theta(x_i), y_i) \right) + \mathcal{L}(f_\theta(x_q), y)$$

If we instead included the downweighted query point in every batch, the overall loss would be:

\begin{align*}
    \mathcal{L}(\mathcal{D'}) &= \sum \limits_{k=0}^{\lceil \frac{b-1}{N} \rceil} \sum \limits_{(x_i, y_i) \in \mathcal{B}_k} \left( \mathcal{L}(f_\theta(x_i), y_i) + \frac{1}{\lceil \frac{b-1}{N} \rceil} \mathcal{L}(f_\theta(x_q), y) \right) \\
    &= \left( \sum \limits_{k=0}^{\lceil \frac{b-1}{N} \rceil} \sum \limits_{(x_i, y_i) \in \mathcal{B}_k} \mathcal{L}(f_\theta(x_i), y_i) \right) + \\ &\lceil \frac{b-1}{N} \rceil \frac{1}{\lceil \frac{b-1}{N} \rceil} \mathcal{L}(f_\theta(x_q), y) \\
    &= \left( \sum \limits_{i=0}^{N} \mathcal{L}(f_\theta(x_i), y_i) \right) + \mathcal{L}(f_\theta(x_q), y)
\end{align*}

which is the same objective as before.

This trick has the effect of still optimizing the same maximum likelihood problem required by CNML, but significantly reducing the variance of the query point predictions as we take additional gradient steps at query time. As a concrete example, consider querying a meta-CNML classifier on the input shown in Figure~\ref{fig:importance-weighting-plots}. If we adapt to the augmented dataset without including the query point in every batch (i.e. without importance weighting), we see that the query point loss is significantly more unstable, requiring us to take more gradient steps to converge. 

\begin{figure}[!h]
    \centering
    \includegraphics[width=0.95\linewidth]{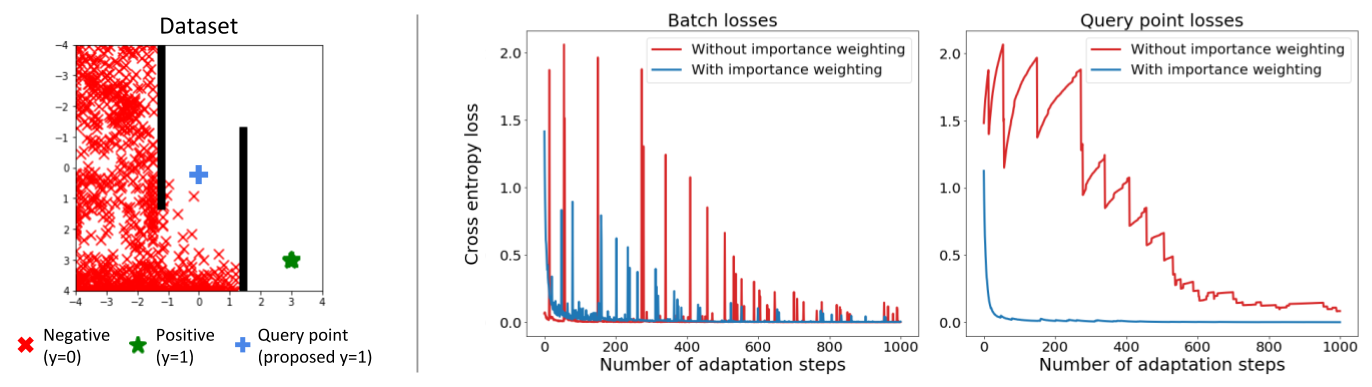}
    \caption{Comparison of adapting to a query point (pictured on left with the original dataset) at test time for CNML with and without importance weighting. The version without importance weighting is more unstable both in terms of overall batch loss and the individual query point loss, and thus takes longer to converge. The spikes in the red lines occur when that particular batch happens to include the query point, since that point's proposed label ($y=1$) is different than those of nearby points ($y=0$). The version with importance weighting does not suffer from this problem because it accounts for the query point in each gradient step, while keeping the optimization objective the same.}
    \label{fig:importance-weighting-plots}
\end{figure}

\subsubsection{Kernel Weighted Training Loss}
The augmented dataset consists of points from the original dataset $\mathcal{D}$ and one augmented point $(x_q, y)$. Given that we mostly care about having the proper likelihood on the query point, with an imperfect optimization process, the meta-training can yield solutions that are not very accurately representing true likelihoods on the query point. To counter this, we introduce a kernel weighting into the loss function in Equation ~\ref{eqn:meta-training-nml-appendix} during meta-training and subsequently meta-testing. The kernel weighting modifies the training loss function as: 

\begin{align*}
\max_{\theta} \hspace{0.1cm} &\mathbb{E}_{\tau \sim \mathcal{S}(\tau)}[\mathbb{E}_{(x, y) \sim \tau}\mathcal{K}(x, x_{\tau})\mathcal{L}(x, y, \theta')] \\ &s.t \hspace{0.2cm} \theta' = \theta - \alpha \nabla_{\theta} \mathbb{E}_{(x, y) \sim \tau} \mathcal{K}(x, x_{\tau})\mathcal{L}(x, y, \theta)
\end{align*}

where $x_{\tau}$ is the query point for task $\tau$ and $\mathcal{K}$ is a choice of kernel. We typically choose exponential kernels centered around $x_{\tau}$. Intuitively, this allows the meta-optimization to mainly consider the datapoints that are copies of the query point in the dataset, or are similar to the query point, and ensures that they have the correct likelihoods, instead of receiving interfering gradient signals from the many other points in the dataset. To make hyperparameter selection intuitive, we designate the strength of the exponential kernel by a parameter $\lambda_{dist}$, which is the Euclidean distance away from the query point at which the weight becomes 0.1. Formally, the weight of a point $x$ in the loss function for query point $x_\tau$ is computed as:

\begin{equation}
    K(x, x_\tau) = \exp{\{-\frac{2.3}{\lambda_{dist}}||x - x_\tau||_2}\}
\end{equation}

\subsubsection{Meta-Training at Fixed Intervals}
While in principle meta-NML would retrain with every new datapoint, in practice we retrain meta-NML once every $k$ epochs. (In all of our experiments we set $k = 1$, but we could optionally increase $k$ if we do not expect the meta-task distribution to change much between epochs.) We warm-start the meta-learner parameters from the previous iteration of meta-learning, so every instance of meta-training only requires a few steps. We find that this periodic training is a reasonable enough approximation, as evidenced by the strong performance of \METHOD in our experimental results in Section~\ref{sec:experiments}.

\subsection{Meta-NML Visualizations}

\subsubsection{Meta-NML with Additional Gradient Steps}
\label{sec:meta-nml-grad-steps}
Below, we show a more detailed visualization of meta-NML outputs on data from the Zigzag Maze task, and how these outputs change with additional gradient steps. For comparison, we also include the idealized NML rewards, which come from a discrete count-based classifier.

Meta-NML is able to resemble the ideal NML rewards fairly well with just 1 gradient step, providing both an approximation of a count-based exploration bonus and better shaping towards the goal due to generalization. By taking additional gradient steps, meta-NML can get arbitrarily close to the true NML outputs, which themselves correspond to inverse counts of $\frac{1}{n+2}$ as explained in Theorem 4.1. While this would give us more accurate NML estimates, in practice we found that taking one gradient step was sufficient to achieve good performance on our RL tasks.

\begin{figure}[!h]
    \centering
    \includegraphics[width=0.9\linewidth]{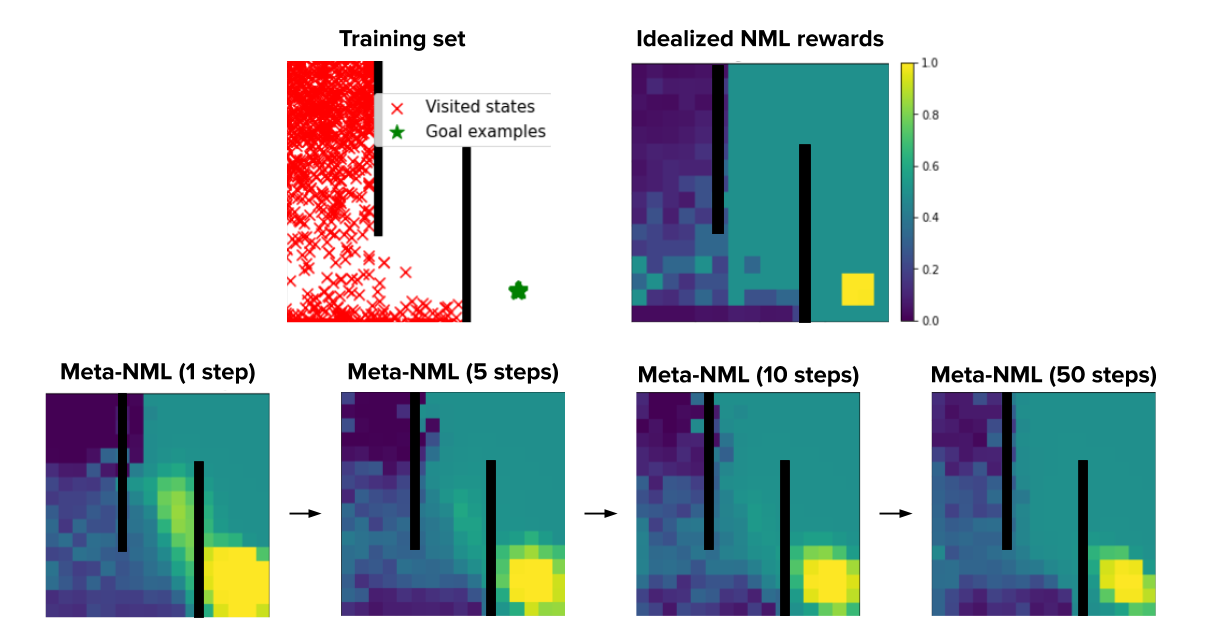}
    \caption{Comparison of idealized (discrete) NML and meta-NML rewards on data from the Zigzag Maze Task. Meta-NML approximates NML reasonably well with just one gradient step at test time, and converges to the true values with additional steps.}
    \label{fig:nml-comparisons-discrete}
\end{figure}

\begin{figure}[!h]
    \centering
    \includegraphics[width=0.7\linewidth]{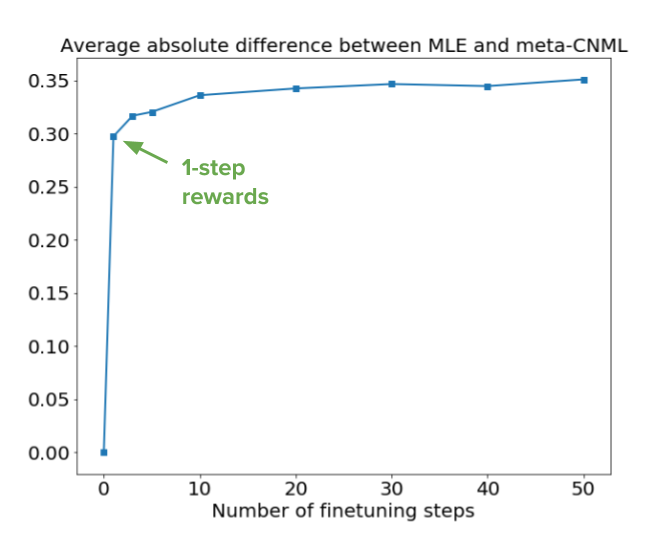}
    \caption{Average absolute difference between MLE and meta-NML goal probabilities across the entire maze state space from Figure~\ref{fig:nml-comparisons-discrete} above. We see that meta-NML learns a model initialization whose parameters can change significantly in a small number of gradient steps. Additionally, most of this change comes from the first gradient step (indicated by the green arrow), which justifies our choice to use only a single gradient step when evaluating meta-NML probabilities for \METHOD.}
    \label{fig:mle-cnml-diff}
\end{figure}

\subsubsection{Comparison of Reward Classifiers}
In Fig~\ref{fig:classifier-reward-comparisons}, we show the comparison between different types of reward. classifiers in the 2D maze navigation problem.
\begin{figure}[!h]
    \centering
    \includegraphics[width=\linewidth]{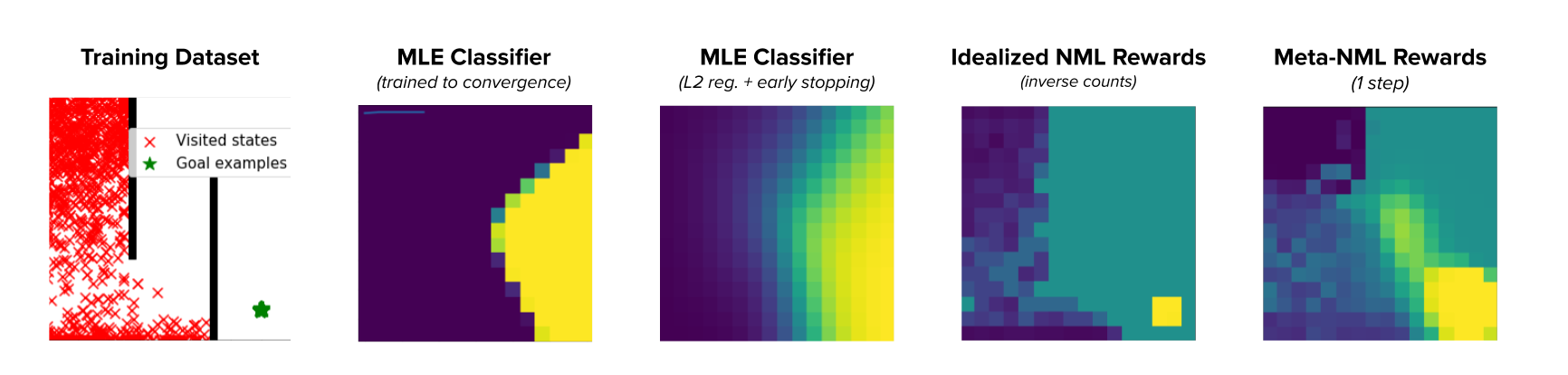}
    \caption{A comparison of the rewards given by various classifier training schemes on the 2D Zigzag maze. From left to right: (1) An MLE classifier when trained to convergence reduces to an uninformative sparse reward; (2) An MLE classifier trained with regularization and early stopping has smoother contours, but does not accurately identify the goal; (3) The idealized NML rewards correspond to inverse counts, thus providing a natural exploration objective in the absence of generalization; (4) The meta-NML rewards approximate the idealized rewards well in visited regions, while also benefitting from better shaping towards the goal due to generalization.}
    \label{fig:classifier-reward-comparisons}
\end{figure}

\subsubsection{Runtime Comparisons}
\label{sec:full-runtimes}
We provide the runtimes for feedforward inference, naive CNML, and meta-NML on each of our evaluation domains. We list both the runtimes for evaluating a single input (Table~\ref{tab:meta-nml-runtimes}), and for completing a full epoch of training during RL (Table~\ref{tab:meta-nml-runtimes-per-epoch}).

These benchmarks were performed on an NVIDIA Titan X Pascal GPU. Per-input runtimes are averaged across 100 samples, and per-epoch runtimes are averaged across 10 epochs.

\begin{table*}[htb]
    \centering
    \begin{tabular}{|l|l|l|l|} \hline
    & \textbf{Feedforward}              & \textbf{Meta-NML}               & \textbf{Naive CNML} \\ \hline
    \textbf{Mazes (zigzag, spiral)} & 0.0004s              & 0.0090s         & 15.19s              \\ \hline
    \textbf{Sawyer 2D Pusher} & 0.0004s & 0.0092s & 20.64s \\ \hline
    \textbf{Sawyer Door} & 0.0004s & 0.0094s & 20.68s \\ \hline
    \textbf{Sawyer 3D Pick} & 0.0005s & 0.0089s & 20.68s \\ \hline
    \textbf{Ant Locomotion} & 0.0004s & 0.0083s & 17.26s \\ \hline
    \textbf{Dexterous Manipulation} & 0.0004s & 0.0081s & 17.58s \\ \hline
    \end{tabular}
    \caption{Runtimes for evaluating a single input point using feedforward, meta-NML, and naive CNML classifiers. Meta-NML provides anywhere between a 1600x and 2300x speedup compared to naive CNML, which is crucial to making our NML-based reward classifier scheme feasible on RL problems.}
    \label{tab:meta-nml-runtimes}
\end{table*}

\begin{table*}[htb]
    \centering
    \begin{tabular}{|l|l|l|l|} \hline
    & \textbf{Feedforward}              & \textbf{Meta-NML}               & \textbf{Naive CNML} \\ \hline
    \textbf{Mazes (zigzag, spiral)} & 23.50s              & 39.05s         & 4hr 13min 34s              \\ \hline
    \textbf{Sawyer 2D Pusher} & 24.91s & 43.81 & 5hr 44min 25s \\ \hline
    \textbf{Sawyer Door} & 19.77s & 38.52s & 5hr 45min 00s \\ \hline
    \textbf{Sawyer 3D Pick} & 20.24s & 40.73s & 5hr 45min 00s \\ \hline
    \textbf{Ant Locomotion} & 37.15s & 73.72s & 4hr 47min 40s \\ \hline
    \textbf{Dexterous Hand Manipulation} & 48.37s & 69.97s & 4hr 53min 00s \\ \hline
    \end{tabular}
    \caption{Runtimes for completing a single epoch of RL according to Algorithm~\ref{alg:baycrl-alg}. We collect 1000 samples in the environment with the current policy for each epoch of training. The naive CNML runtimes are extrapolated based on the per-input runtime in the previous table, while the feedforward and meta-NML runtimes are averaged over 10 actual epochs of RL. These times indicate that naive CNML would be computationally infeasible to run in an RL algorithm, whereas meta-NML is able to achieve performance much closer to that of an ordinary feedforward classifier and make learning possible.}
    \label{tab:meta-nml-runtimes-per-epoch}
\end{table*}

\subsection{Experimental Details}

\begin{figure}[!h]
    \centering
    \includegraphics[width=0.3\textwidth]{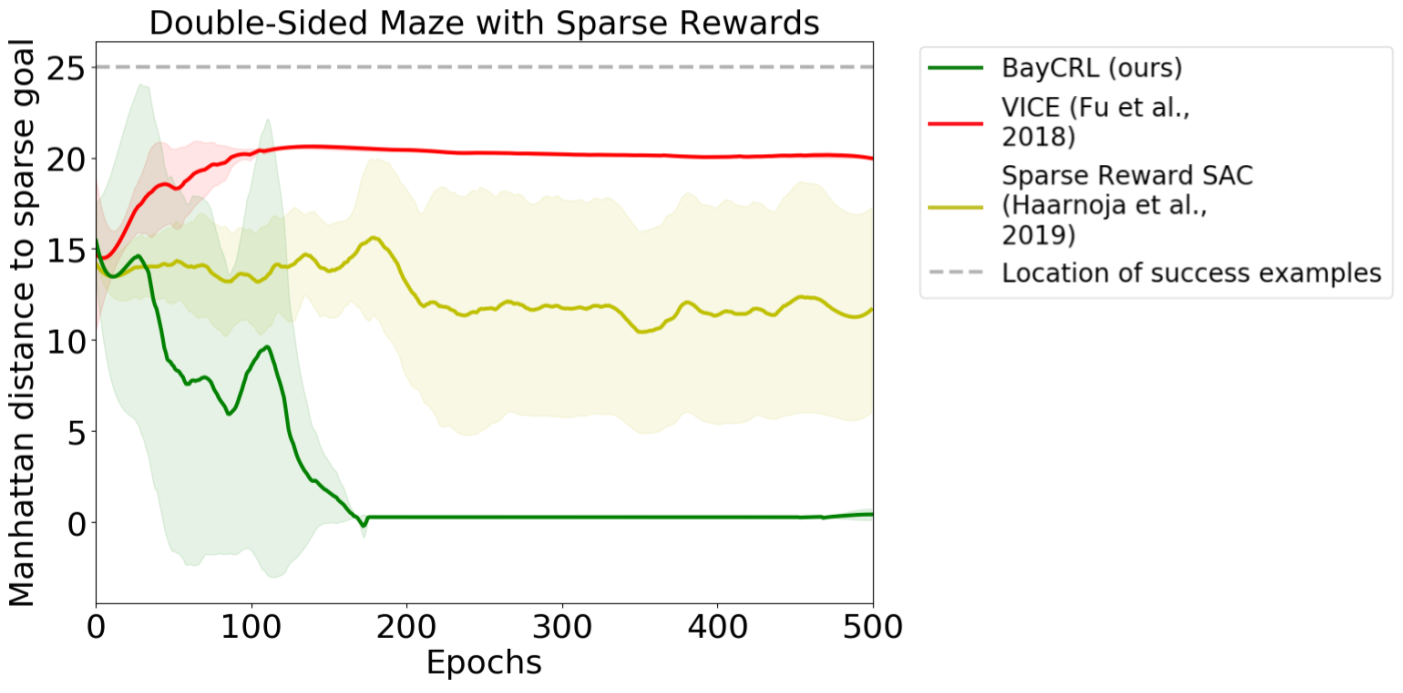}
    \caption{\footnotesize{Performance of \METHOD, VICE, and SAC with sparse rewards on a double-sided maze where some sparse reward states are not provided as goal examples. \METHOD is still able to find the sparse rewards, thus receiving higher overall reward, whereas ordinary classifier methods (i.e. VICE) move only towards the provided examples and thus are never able to find the additional rewards. Standard SAC with sparse rewards, also included for comparison, is generally unable to find the goals. The dashed gray line represents the location of the goal examples initially provided to both \METHOD and VICE.}}
    \label{fig:double-maze-plot}
\end{figure}

\subsubsection{Environments}
\label{sec:environment-details}
\textbf{Zigzag Maze and Spiral Maze:} These two navigation tasks require moving through long corridors and avoiding several local optima in order to reach the goal. For example, on Spiral Maze, the agent must not get stuck on the other side of the inner wall, even though that position would be close in L2 distance to the desired goal. On these tasks, a sparse reward is not informative enough for learning, while ordinary classifier methods get stuck in local optima due to poor shaping near the goal.

Both of these environments have a continuous state space consisting of the $(x, y)$ coordinates of the agent, ranging from $(-4, -4)$ to $(4, 4)$ inclusive. The action space is the desired velocity in the $x$ and $y$ directions, each ranging from $-1$ to $1$ inclusive.

\textbf{Sawyer 2D Pusher:} This task involves using a Sawyer arm, constrained to move only in the $xy$ plane, to push a randomly initialized puck to a fixed location on a table. The state space consists of the $(x, y, z)$ coordinates of the robot end effector and the $(x, y)$ coordinates of the puck. The action space is the desired $x$ and $y$ velocities of the arm.

\textbf{Sawyer Door Opening:} In this task, the Sawyer arm is attached to a hook, which it must use to open a door to a desired angle of 45 degrees. The door is randomly initialized each time to be at a starting angle of between 0 and 15 degrees. The state space consists of the $(x, y, z)$ coordinates of the end effector and the door angle (in radians); the action space consists of $(x, y, z)$ velocities.

\textbf{Sawyer 3D Pick and Place:} The Sawyer robot must pick up a ball, which is randomly placed somewhere on the table each time, and raise it to a fixed $(x, y, z)$ location high above the table. This represents the biggest exploration challenge out of all the manipulation tasks, as the state space is large and the agent would normally not receive any learning signal unless it happened to pick up the ball and raise it, which is unlikely without careful reward shaping.

The state space consists of the $(x, y, z)$ coordinates of the end effector, the $(x, y, z)$ coordinates of the ball, and the tightness of the gripper (a continuous value between 0 and 1). The robot can control its $(x, y, z)$ arm velocity as well as the gripper value. 

\textbf{Ant Locomotion:} In this task, the quadruped ant robot has to navigate from one end of a maze to the other. This represents a high dimensional action space of 8 dimensions, and a high dimensional state space of 15 dimensions as well. The state space consists of the center of mass of the object as well as the positions of the various joints of the ant, and the action space controls the torques on all the joints. 

\textbf{Hand Manipulation:} In this task, a 16 DoF robotic hand is mounted on a robot arm and has to reposition an object on a table. The task is challenging due to high dimensionality of the state and action spaces. The state space consists of the arm position, hand joint positions and object positions. In this task, we allow the classifier privileged access to the object position only, but provide the full state space as input to the policy. All the other baseline techniques are provided this same information as well (e.g. the classifier for VICE receives the object position as input).

\subsubsection{Ground Truth Distance Metrics}
\label{sec:distance-metrics}
In addition to the success rate plots in Figure~\ref{fig:comparisons}, we provide plots of each algorithm's distance to the goal over time according to environment-specific distance metrics. The distance metrics and success thresholds, which were used to compute the success rates in Figure~\ref{fig:comparisons}, are listed in the table on the next page.
\begin{table*}[htb]
\centering
    \begin{tabular}{|l|l|l|} \hline
    \textbf{Environment}              & \textbf{Distance Metric Used}               & \textbf{Success Threshold} \\ \hline
    Zigzag Maze              & Maze distance to goal         & 0.5               \\ \hline
    Spiral Maze              & Maze distance to goal         & 0.5               \\ \hline
    Sawyer 2D Pusher         & Puck L2 distance to goal           & 0.05              \\ \hline
    Sawyer Door Opening      & Angle difference to goal (radians) & 0.035             \\ \hline
    Sawyer 3D Pick-and-Place & Ball L2 distance to goal           & 0.06              \\ \hline
   Ant Locomotion & Maze distance to goal           & 5              \\ \hline
    Dexterous Manipulation & Object L2 distance to goal           & 0.06              \\ \hline
    \end{tabular}
\end{table*}

\begin{figure*}[!h]
    \centering
    \includegraphics[width=0.85\textwidth]{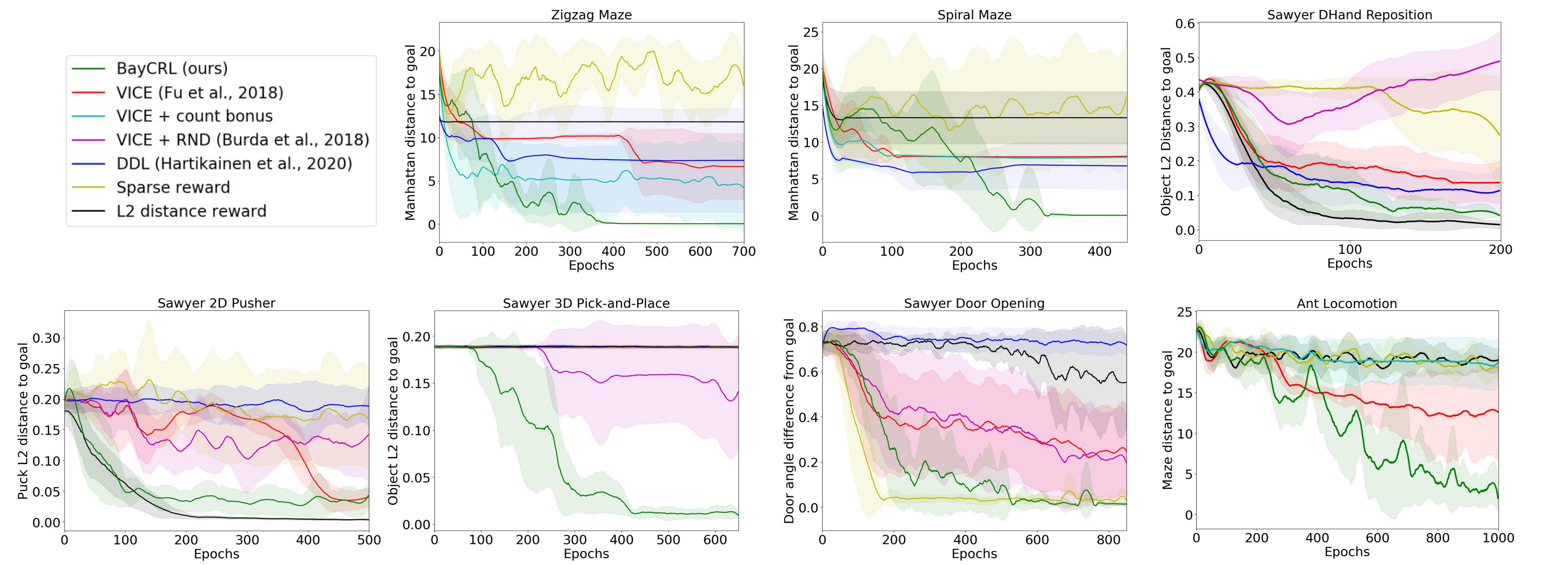}
    \caption{\footnotesize{Performance of \METHOD compared to other algorithms according to ground truth distance metrics. We note that while other algorithms seem to be making progress according to these distances, they are often actually getting stuck in local minima, as indicated by the success rates in Figure~\ref{fig:comparisons} and the visitation plots in Figure~\ref{fig:visitations}.}}
    \label{fig:distance-metrics}
\end{figure*}

\subsection{Additional Ablations}
\label{additional-ablations}
\subsubsection{Learning in a Discrete, Randomized Environment}
In practice, many continuous RL environments such as the ones we consider in \autoref{sec:experiments} have state spaces that are correlated at least roughly with the dynamics. For instance, states that are closer together dynamically are also typically closer in the metric space defined by the states. This correlation does not need to be perfect, but as long as it exists, \METHOD can in principle learn a smoothly shaped reward towards the goal.

However, even in the case where states are unstructured and completely lack identity, such as in a discrete gridworld environment, the CNML classifier would still reduce to providing an exploration-centric reward bonus, as indicated by \autoref{cnml-counts}, ensuring reasonable worst-case performance.

To demonstrate this, we evaluate \METHOD on a variant of the Zigzag Maze task where states are first discretized to a $16 \times 16$ grid, then "shuffled" so that the $xy$ representation of a state does not correspond to its true coordinates and the states are not correlated dynamically. \METHOD manages to solve the task, while a standard classifier method (VICE) does not. Still, \METHOD is more effective in the original state space where generalization is possible, suggesting that both the exploration and reward shaping abilities of the CNML classifier are crucial to its overall performance.

\begin{figure}[!h]
    \centering
    \includegraphics[width=0.4\textwidth]{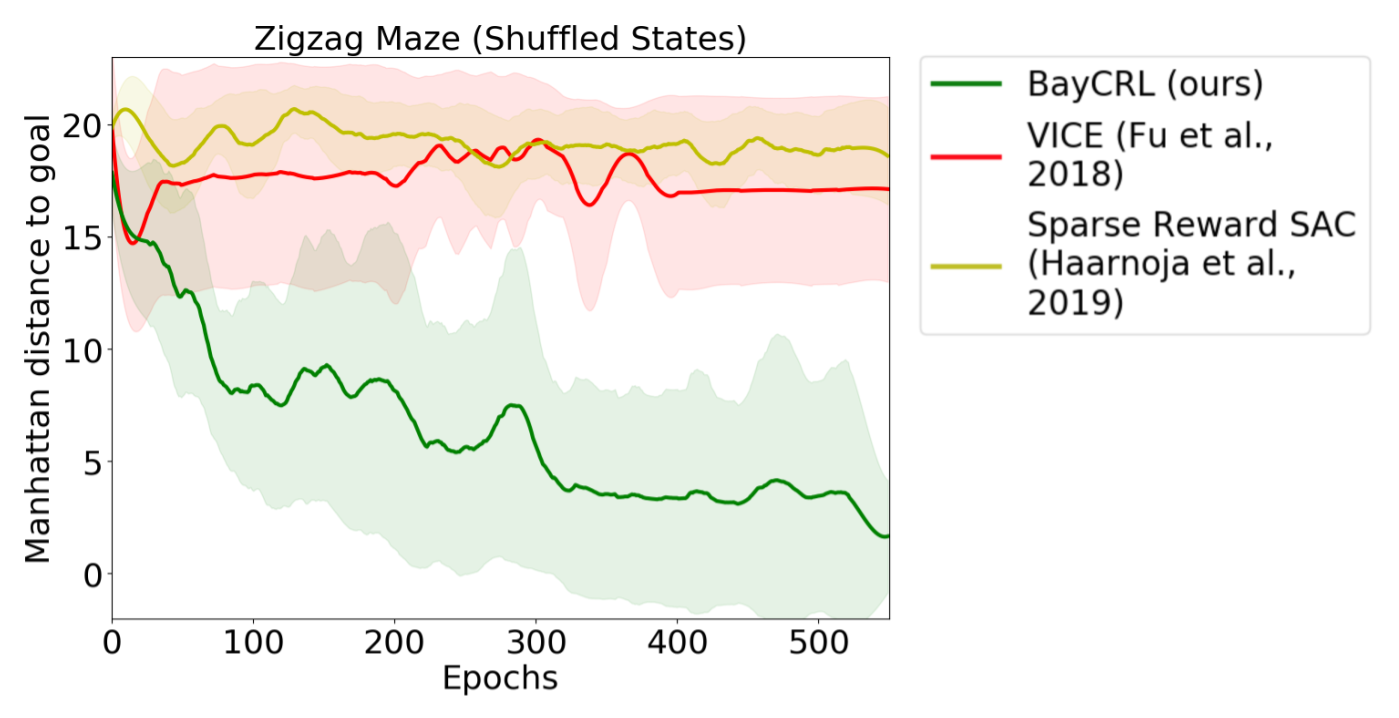}
    \caption{\footnotesize{Comparison of \METHOD, VICE, and SAC with sparse rewards on a discrete, randomized variant of the Zigzag Maze task. \METHOD is still able to solve the task on a majority of runs due to its connection to a count-based exploration bonus, whereas ordinary classifier methods (i.e. VICE) experience significantly degraded performance in the absence of any generalization across states.}}
    \label{fig:shuffled-maze-plots}
\end{figure}

\subsubsection{Finding "Hidden" Rewards Not Indicated by Success Examples}

\begin{figure}[!h]
    \centering
    \includegraphics[width=0.3\linewidth]{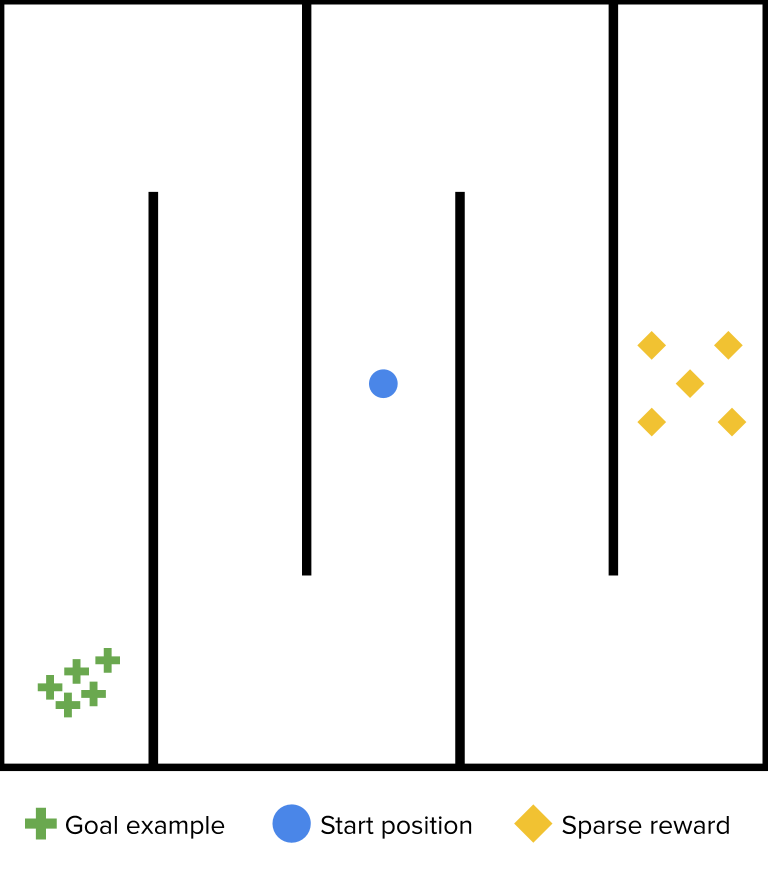}
    \caption{\footnotesize{Visualization of the Double-Sided Maze environment. Only the goal examples in the bottom left corner are provided to the algorithm.}}
    \vspace{-0.2cm}
    \label{fig:double-maze-layout}
\end{figure}

\begin{figure}[!h]
    \centering
    \vspace{-0.3cm}
    \includegraphics[width=0.8\linewidth]{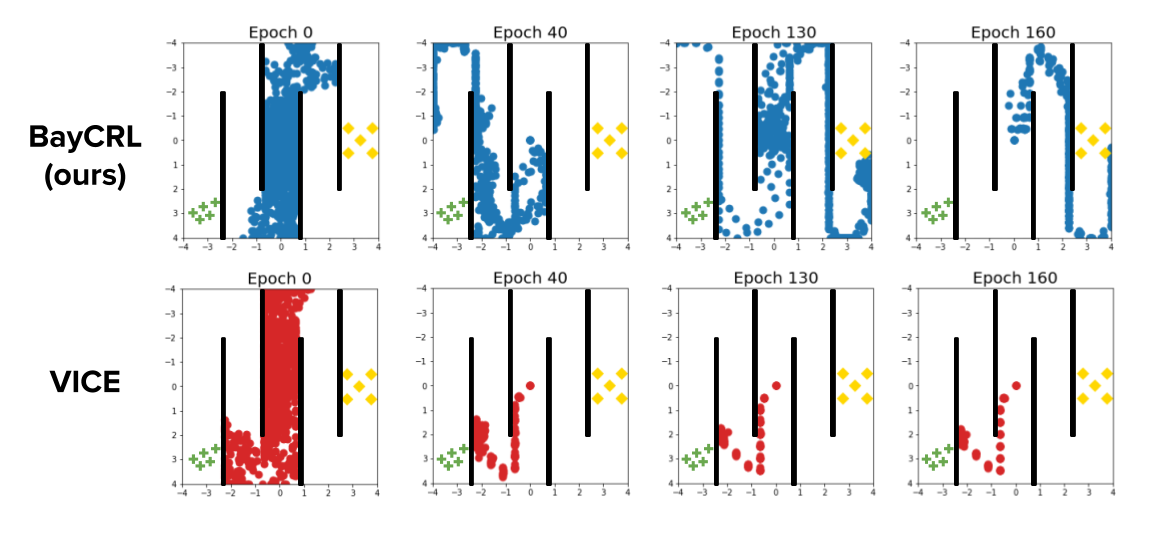}
    \caption{\footnotesize{Plot of visitations for \METHOD vs. VICE on the double-sided maze task. \METHOD is initially guided towards the provided goals in the bottom left corner as expected, but continues to explore in both directions, thus allowing it to find the hidden sparse rewards as well. Once this happens, it focuses on the right side of the maze instead because those rewards are easier to reach. In contrast, VICE moves only towards the (incomplete) set of provided goals on the left, ignoring the right half of the maze entirely and quickly getting stuck in a local optima.}}
    \label{fig:double-maze-visitations}
    \vspace{-0.4cm}
\end{figure}

The intended setup for \METHOD (and classifier-based RL algorithms in general) is to provide a  set of success examples to learn from, thus removing the need for a manually specified reward function. However, here we instead consider the case where a ground truth reward function exists which we do not fully know, and can only query through interaction with the environment. In this case, because the human expert has limited knowledge, the provided success examples may not cover all regions of the state space with high reward. 

An additional advantage of \METHOD is that it is still capable of finding these "unspecified" goals because of its built-in exploration behavior, whereas other classifier methods would operate solely based on the goal examples provided. To see this, we evaluate our algorithm on a two-sided variant of the Zigzag Maze with multiple goals, visualized in Figure~\ref{fig:double-maze-layout} to the right. The agent starts in the middle and is provided with 5 goal examples on the far left side of the maze; unknown to it, the right side contains 5 sparse reward regions which are actually closer from its initial position.

As shown in Figures~\ref{fig:double-maze-plot} and \ref{fig:double-maze-visitations}, \METHOD manages to find the sparse rewards while other methods do not. \METHOD, although initially guided towards the provided goal examples on the left, continues to explore in both directions and eventually finds the "hidden" rewards on the right. Meanwhile, VICE focuses solely on the provided goals, and gets stuck in a local optima near the bottom left corner.

\subsection{Hyperparameter and Implementation Details}
\label{appsec:hyperparams}
We describe the hyperparameter choices and implementation details for our experiments here. We first list the general hyperparameters that were shared across runs, then provide tables of additional hyperparameters we tuned over for each domain and algorithm. 

\textbf{Goal Examples:} For the classifier-based methods in our experiments (VICE and \METHOD), we provide 150 goal examples for each environment at the start of training. These are used as the pool of positive examples when training the success classifier.

\textbf{DDL Reward:} We use the version of DDL proposed in~\cite{ddl} where we provide the algorithm with the ground truth goal state $\mathbf{g}$, then run SAC with a reward function of $r(\mathbf{s}) = -d^\pi(\mathbf{s}, \mathbf{g})$, where $d^\pi$ is the learned dynamical distance function.

\subsubsection{General Hyperparameters}
\begin{table}[!h]
    \centering
    \begin{tabular}{| p{3cm}||p{4cm} |}
     \hline
     \textbf{SAC} & \\
     \hline
     Learning Rate & $3 \times 10^{-4}$\\
     Discount Factor $\gamma$ & $0.99$\\
     Policy Type & Gaussian\\
     Policy Hidden Sizes & $(512, 512)$\\
     Policy Hidden Activation & ReLU\\
     RL Batch Size & $1024$ \\
     Reward Scaling & $1$\\
     Replay Buffer Size & $500,000$\\
     Q Hidden Sizes & $(512, 512)$\\
     Q Hidden Activation & ReLU\\
     Q Weight Decay & $0$ \\
     Q Learning Rate & $3 \times 10^{-4}$\\
     Target Network $\tau$ & $5\times10^{-3}$ \\
     \hline
     \textbf{\METHOD} & \\
     \hline
     Adaptation batch size & 64\\
     Meta-training tasks per epoch & 128\\
     Meta-test set size & 2048\\
     \hline
     \textbf{VICE} & \\
     \hline
     Classifier Learning Rate & $1 \times 10^{-4}$\\
     Classifier Batch Size & $128$\\
     Classifier Optimizer & Adam\\
     RL Algorithm & SAC\\
     \hline
     \textbf{RND} & \\
     \hline
     Hidden Layer Sizes & $(256, 256)$\\
     Output Units & $512$\\
     \hline
    \end{tabular}
\caption{General hyperparameters used across all domains.}
\end{table}

\vfill\break

\subsubsection{Zigzag Maze Hyperparameters}
\begin{table}[!h]
    \centering
    \begin{tabular}{| p{3.5cm}||p{3.5cm} |}
     \hline
     \textbf{\METHOD} & \\
     \hline
     Classifier Hidden Layers & [(512, 512), \textbf{(2048, 2048)}] \\
     $\lambda_{dist}$ & $[\textbf{0.5}, 1]$\\
     $k_{query}$ & $\textbf{1}$\\
     \hline
     \textbf{VICE} & \\
     \hline
     $n_{\text{VICE}}$ & $[1, \textbf{2}, 10]$\\
     Mixup $\alpha$ & $[0, \textbf{1}]$\\
     Weight Decay $\lambda$ & $[0, \mathbf{5 \times 10^{-3}}]$ \\
     \hline
     \textbf{VICE+Count Bonus} & \\
     \hline
     $n_{\text{VICE}}$ & $[1, \textbf{2}, 10]$\\
     Mixup $\alpha$ & $[0, \textbf{1}]$\\
     Classifier reward scale & $[\textbf{0.25}, 0.5, 1]$\\
     Weight Decay $\lambda$ & $[\mathbf{0}, 5 \times 10^{-3}]$ \\
     \hline
     \textbf{DDL} & \\
     \hline
     $N_{d}$ & $[\mathbf{2}, 4]$\\
     Training frequency (every $n$ steps) & $[16, \mathbf{64}]$\\
     \hline
    \end{tabular}
    \caption{Hyperparameters we tuned for the Zigzag Maze task. Bolded values are what we use for the final runs in Section 6.}
\end{table}

\subsubsection{Spiral Maze Hyperparameters}
\begin{table}[!h]
    \centering
    \begin{tabular}{| p{3cm}||p{4cm} |}
     \hline
     \textbf{\METHOD} & \\
     \hline
     Classifier Hidden Layers & [(512, 512), \textbf{(2048, 2048)}] \\
     $\lambda_{dist}$ & $[\textbf{0.5}, 1]$\\
     $k_{query}$ & $\textbf{1}$\\
     \hline
     \textbf{VICE} & \\
     \hline
     $n_{\text{VICE}}$ & $[1, \textbf{2}, 10]$\\
     Mixup $\alpha$ & $[0, \textbf{1}]$\\
     Weight Decay $\lambda$ & $[0, \mathbf{5 \times 10^{-3}}]$ \\
     \hline
     \textbf{VICE+Count Bonus} & \\
     \hline
     $n_{\text{VICE}}$ & $[1, \textbf{2}, 10]$\\
     Mixup $\alpha$ & $[0, \textbf{1}]$\\
     Classifier reward scale & $[\textbf{0.25}, 0.5, 1]$\\
     Weight Decay $\lambda$ & $[\mathbf{0}, 5 \times 10^{-3}]$ \\
     \hline
     \textbf{DDL} & \\
     \hline
     $N_{d}$ & $[\mathbf{2}, 4]$\\
     Training frequency (every $n$ steps) & $[16, \mathbf{64}]$\\
     \hline
    \end{tabular}
    \caption{Hyperparameters we tuned for the Spiral Maze task. Bolded values are what we use for the final runs in Section 6.}
\end{table}

\newpage
\subsubsection{Ant Locomotion Hyperparameters}
\begin{table}[!h]
    \centering
    \begin{tabular}{| p{3cm}||p{4cm} |}
     \hline
     \textbf{\METHOD} & \\
     \hline
     Classifier Hidden Layers & [(512, 512), \textbf{(2048, 2048)}] \\
     $\lambda_{dist}$ & $[0.5, \textbf{1}, 1.5, 2]$\\
     $k_{query}$ & $\textbf{1}$\\
     \hline
     \textbf{VICE} & \\
     \hline
     $n_{\text{VICE}}$ & $[1, \textbf{2}, 10]$\\
     Mixup $\alpha$ & $[0, \textbf{1}]$\\
     Weight Decay $\lambda$ & $[0, \mathbf{5 \times 10^{-3}}]$ \\
     \hline
     \textbf{VICE+Count Bonus} & \\
     \hline
     $n_{\text{VICE}}$ & $[1, \textbf{2}, 10]$\\
     Mixup $\alpha$ & $[0, \textbf{1}]$\\
     Classifier reward scale & $[\textbf{0.25}, 0.5, 1]$\\
     Weight Decay $\lambda$ & $\mathbf{5 \times 10^{-3}}$ \\
     \hline
     \textbf{DDL} & \\
     \hline
     $N_{d}$ & $[2, \mathbf{4}]$\\
     Training frequency (every $n$ steps) & $[\mathbf{16}, 64]$\\
     \hline
    \end{tabular}
    \caption{Hyperparameters we tuned for the Ant Locomotion task. Bolded values are what we use for the final runs in Section 6.}
\end{table}

\subsubsection{Sawyer Push Hyperparameters}
\begin{table}[!h]
    \centering
    \begin{tabular}{| p{3.5cm}||p{3.5cm} |}
     \hline
     \textbf{\METHOD} & \\
     \hline
     Classifier Hidden Layers & [(512, 512), \textbf{(2048, 2048)}] \\
     $\lambda_{dist}$ & $[0.2, \textbf{0.6}, 1]$\\
     $k_{query}$ & $\textbf{1}$\\
     \hline
     \textbf{VICE} & \\
     \hline
     $n_{\text{VICE}}$ & $[1, 2, \textbf{10}]$\\
     Mixup $\alpha$ & $[0, \textbf{1}]$\\
     Weight Decay $\lambda$ & $[\textbf{0}, 5 \times 10^{-3}]$ \\
     \hline
     \textbf{VICE + RND} & \\
     \hline
     $n_{\text{VICE}}$ & $[1, 2, \textbf{10}]$\\
     Mixup $\alpha$ & $[0, \textbf{1}]$\\
     RND reward scale & $[\textbf{1}, 5, 10]$\\
     \hline
     \textbf{DDL} & \\
     \hline
     $N_{d}$ & $[\textbf{4}, 10]$\\
     Training frequency (every $n$ steps) & $[\textbf{16}, 64]$\\
     \hline
    \end{tabular}
    \caption{Hyperparameters we tuned for the Sawyer Push task. Bolded values are what we use for the final runs in Section 6.}
    \vspace{-0.5cm}
\end{table}

\vfill\break

\subsubsection{Sawyer Pick-and-Place Hyperparameters}
\begin{table}[!h]
    \centering
    \begin{tabular}{| p{3.5cm}||p{3.5cm} |}
     \hline
     \textbf{\METHOD} & \\
     \hline
     Classifier Hidden Layers & [\textbf{(512, 512)}, (2048, 2048)] \\
     $\lambda_{dist}$ & $[0.2, \textbf{0.6}, 1]$\\
     $k_{query}$ & $\textbf{1}$\\
     \hline
     \textbf{VICE} & \\
     \hline
     $n_{\text{VICE}}$ & $[1, \textbf{2}, 10]$\\
     Mixup $\alpha$ & $[0, \textbf{1}]$\\
     Weight Decay $\lambda$ & $[\textbf{0}, 5 \times 10^{-3}]$ \\
     \hline
     \textbf{VICE + RND} & \\
     \hline
     $n_{\text{VICE}}$ & $[1, \textbf{2}, 10]$\\
     Mixup $\alpha$ & $[0, \textbf{1}]$\\
     RND reward scale & $[\textbf{1}, 5, 10]$\\
     \hline
     \textbf{DDL} & \\
     \hline
     $N_{d}$ & $[4, \textbf{10}]$\\
     Training frequency (every $n$ steps) & $[\textbf{16}, 64]$\\
     \hline
    \end{tabular}
    \caption{Hyperparameters we tuned for the Sawyer Pick-and-Place task. Bolded values are what we use for the final runs in Section 6.}
\end{table}

\subsubsection{Sawyer Door Opening Hyperparameters}
\begin{table}[!h]
    \centering
    \begin{tabular}{| p{3.5cm}||p{3.5cm} |}
     \hline
     \textbf{\METHOD} & \\
     \hline
     Classifier Hidden Layers & [\textbf{(512, 512)}, (2048, 2048)] \\
     $\lambda_{dist}$ & $[0.05, 0.1, \textbf{0.25}]$\\
     $k_{query}$ & [1, $\textbf{2}]$\\
     \hline
     \textbf{VICE} & \\
     \hline
     $n_{\text{VICE}}$ & $[1, \textbf{5}, 10]$\\
     Mixup $\alpha$ & $[\textbf{0}, 1]$\\
     Weight Decay $\lambda$ & $[\textbf{0}, 5 \times 10^{-3}]$ \\
     \hline
     \textbf{VICE + RND} & \\
     \hline
     $n_{\text{VICE}}$ & $[1, \textbf{5}, 10]$\\
     Mixup $\alpha$ & $[\textbf{0}, 1]$\\
     RND reward scale & $[1, \textbf{5}, 10]$\\
     \hline
     \textbf{DDL} & \\
     \hline
     $N_{d}$ & $[4, \textbf{10}]$\\
     Training frequency (every $n$ steps) & $[\textbf{16}, 64]$\\
     \hline
    \end{tabular}
    \caption{Hyperparameters we tuned for the Sawyer Door Opening task. Bolded values are what we use for the final runs in Section 6.}
\end{table}

\newpage

\subsubsection{Dexterous Hand Repositioning Hyperparameters}
\begin{table}[!h]
    \centering
    \begin{tabular}{| p{3.5cm}||p{3.5cm} |}
     \hline
     \textbf{\METHOD} & \\
     \hline
     Classifier Hidden Layers & [(512, 512), \textbf{(2048, 2048)}] \\
     $\lambda_{dist}$ & $[0.2, \textbf{0.5}, 1]$\\
     $k_{query}$ & $\textbf{1}$\\
     \hline
     \textbf{VICE} & \\
     \hline
     $n_{\text{VICE}}$ & $[1, 2, \textbf{10}]$\\
     Mixup $\alpha$ & $[\textbf{0}, 1]$\\

     Weight Decay $\lambda$ & $[0, \mathbf{5 \times 10^{-3}}]$ \\
     \hline
     \textbf{VICE + RND} & \\
     \hline
     $n_{\text{VICE}}$ & $[1, \textbf{2}, 10]$\\
     Mixup $\alpha$ & $[\textbf{0}, 1]$\\
     RND reward scale & $[\textbf{1}, 5, 10]$\\
     \hline
     \textbf{DDL} & \\
     \hline
     $N_{d}$ & $[\textbf{4}, 10]$\\
     Training frequency (every $n$ steps) & $[\textbf{16}, 64]$\\
     \hline
    \end{tabular}
    \caption{Hyperparameters we tuned for the Dexterous Hand Repositioning task. Bolded values are what we use for the final runs in Section 6.}
\end{table}

\end{document}